\newcommand*{\Resize}[2]{\resizebox{0.91\linewidth}{!}{$#1$}}
\newcommand\footnoteref[1]{\protected@xdef\@thefnmark{\ref{#1}}\@footnotemark}
    \crefname{figure}{Figure}{Figures}%
\newcommand{\eqlabelleft}{(}
\newcommand{\eqlabelright}{)}
\theoremstyle{plain}
\theoremstyle{definition}
\theoremstyle{remark}
\def\eqref#1{equation~\ref{#1}}
\def\1{\bm{1}}
\def\rvx{{\mathbf{x}}}
\def\rvz{{\mathbf{z}}}
\DeclareMathAlphabet{\mathsfit}{\encodingdefault}{\sfdefault}{m}{sl}
\SetMathAlphabet{\mathsfit}{bold}{\encodingdefault}{\sfdefault}{bx}{n}
\newcommand{\E}{\mathbb{E}}
\icmltitlerunning{Diffusion Rejection Sampling}
\begin{document}

\twocolumn[
\icmltitle{Diffusion Rejection Sampling}

% It is OKAY to include author information, even for blind
% submissions: the style file will automatically remove it for you
% unless you've provided the [accepted] option to the icml2024
% package.

% List of affiliations: The first argument should be a (short)
% identifier you will use later to specify author affiliations
% Academic affiliations should list Department, University, City, Region, Country
% Industry affiliations should list Company, City, Region, Country

% You can specify symbols, otherwise they are numbered in order.
% Ideally, you should not use this facility. Affiliations will be numbered
% in order of appearance and this is the preferred way.
\icmlsetsymbol{equal}{*}

\begin{icmlauthorlist}
\icmlauthor{Byeonghu Na}{kaist}
\icmlauthor{Yeongmin Kim}{kaist}
\icmlauthor{Minsang Park}{kaist}
\icmlauthor{Donghyeok Shin}{kaist}
\icmlauthor{Wanmo Kang}{kaist}
\icmlauthor{Il-Chul Moon}{kaist,summary}
\end{icmlauthorlist}

\icmlaffiliation{kaist}{Department of Industrial \& Systems Engineering, KAIST, Daejeon, Republic of Korea}
\icmlaffiliation{summary}{summary.ai, Daejeon, Republic of Korea}

\icmlcorrespondingauthor{Il-Chul Moon}{icmoon@kaist.ac.kr}
\icmlcorrespondingauthor{Byeonghu Na}{byeonghu.na@kaist.ac.kr}

% You may provide any keywords that you
% find helpful for describing your paper; these are used to populate
% the "keywords" metadata in the PDF but will not be shown in the document
\icmlkeywords{Machine Learning, ICML}

\vskip 0.3in
]

% this must go after the closing bracket ] following \twocolumn[ ...

% This command actually creates the footnote in the first column
% listing the affiliations and the copyright notice.
% The command takes one argument, which is text to display at the start of the footnote.
% The \icmlEqualContribution command is standard text for equal contribution.
% Remove it (just {}) if you do not need this facility.

\printAffiliationsAndNotice{}  % leave blank if no need to mention equal contribution
%\printAffiliationsAndNotice{\icmlEqualContribution} % otherwise use the standard text.

%This document provides a basic paper template and submission guidelines. Abstracts must be a single paragraph, ideally between 4--6 sentences long. Gross violations will trigger corrections at the camera-ready phase.
\begin{abstract}
% Diffusion models have received widespread attention for their ability to generate high-quality samples. 
Recent advances in powerful pre-trained diffusion models encourage the development of methods to improve the sampling performance under well-trained diffusion models. This paper introduces Diffusion Rejection Sampling (DiffRS), which uses a rejection sampling scheme that aligns the sampling transition kernels with the true ones at each timestep. The proposed method can be viewed as a mechanism that evaluates the quality of samples at each intermediate timestep and refines them with varying effort depending on the sample.
% DiffRS utilizes the ratio of true and pre-trained transition kernels, estimated by a time-dependent discriminator.
Theoretical analysis shows that DiffRS can achieve a tighter bound on sampling error compared to pre-trained models. Empirical results demonstrate the state-of-the-art performance of DiffRS on the benchmark datasets and the effectiveness of DiffRS for fast diffusion samplers and large-scale text-to-image diffusion models. Our code is available at \url{https://github.com/aailabkaist/DiffRS}.
\end{abstract}

\section{Introduction}
\label{sec:intro}

% A significant amount of research has focused on improving the expressiveness of diffusion models through advances in network architecture or objective structure. For example, some studies proposed loss weights for timesteps or regularization methods for the diffusion objectives~\cite{kim2022soft,kingma2023understanding,lai2023fp}. Additionally, alternative approaches involve the investigation of the effective latent space~\cite{vahdat2021score,rombach2022high,kim2022maximum}. Other efforts aim at learning an implicit prior distribution to minimize the prior mismatch error and reduce the sampling length~\cite{zheng2023truncated}. However, these methods require time-consuming training of the diffusion model.
% The sampling error, which indicates the discrepancy between the data and generated distributions, is decomposed into three factors: the network approximation error, the prior mismatch error, and the temporal-discretization error~\cite{kim2022maximum}.

Diffusion models have attracted considerable interest in various domains, such as image~\cite{dhariwal2021diffusion,rombach2022high} and video generation~\cite{ho2022video,voleti2022mcvd}, due to their remarkable ability to generate high-quality samples. The powerful generative capabilities of diffusion models have spurred extensive efforts to further improve the sampling quality.
A common strategy is to reduce the sampling interval, thereby increasing the iterative sampling count~\cite{karras2022elucidating}. However, this comes at the cost of a higher number of network evaluations, which slows down the sampling speed.
An alternative approach is to improve the training of the reverse diffusion process to accurately model the reverse transition~\cite{kim2022soft,rombach2022high,lai2023fp,zheng2023truncated}. Nonetheless, these methods require time-consuming training of the diffusion model.

In contrast to these approaches, recent advances in powerful pre-trained models~\cite{rombach2022high,karras2022elucidating} have led to a growing body of research focused on leveraging them~\cite{kim2023refining,xu2023restart,ning2024elucidating}. In line with these efforts, our goal is to effectively and efficiently leverage a well-trained diffusion model to improve the sampling quality. We introduce a mechanism that assesses the quality of a sample at each intermediate timestep, allowing us to keep good samples as well as to refine poor samples by injecting appropriate noise and by going back to prior timesteps.

Specifically, we propose Diffusion Rejection Sampling (DiffRS), which is based on the ratio of the true transition kernel to the transition kernel of the pre-trained model for each timestep, see \cref{fig:overview}. The ratio can be estimated by a time-dependent discriminator that distinguishes between data and generated samples at each timestep. In cases where samples are rejected, we adjust the noise intensity depending on the rejected samples. We theoretically prove that discriminator training leads to a tighter upper bound on the sampling error of DiffRS compared to a pre-trained diffusion model.
In the experiments, DiffRS achieves new state-of-the-art (SOTA) performance on CIFAR-10, and near-SOTA performance on ImageNet 64$\times$64 with fewer NFEs. Moreover, we demonstrate the effective application of DiffRS to the fast diffusion samplers, such as DPM-Solver++~\cite{lu2022dpm++} and Consistency Model~\cite{song2023consistency}, and large-scale text-to-image generation models, including Stable Diffusion~\cite{rombach2022high}.

% In contrast to these approaches, we aim to effectively and efficiently leverage a well-trained diffusion model to improve the sampling quality. We introduce Diffusion Rejection Sampling, abbreviated as DiffRS, which is based on a rejection sampling scheme that aligns the reverse transition with the true transition at each timstep, as illustrated in \cref{fig:overview}. Since diffusion models involve iterative sampling, we propose sequential rejection sampling on transition kernels. We also introduce an acceptance probability estimator using a time-dependent discriminator that distinguishes between data and generated samples at each timestep. Theoretically, we prove that DiffRS has a tighter upper bound on the sampling error compared to the pre-trained diffusion model. Empirically, DiffRS achieves new state-of-the-art (SOTA) performance on CIFAR-10, and near-SOTA performance on ImageNet 64$\times$64 with fewer NFEs. Moreover, we demonstrate the effective application of DiffRS to diffusion distillation models, such as Consistency Model~\cite{song2023consistency}, and large-scale text-to-image generation models, including Stable Diffusion~\cite{rombach2022high}.

\begin{figure*}[t]
    %\vspace{-.5em}
    \centering
    \includegraphics[width=0.895\linewidth]{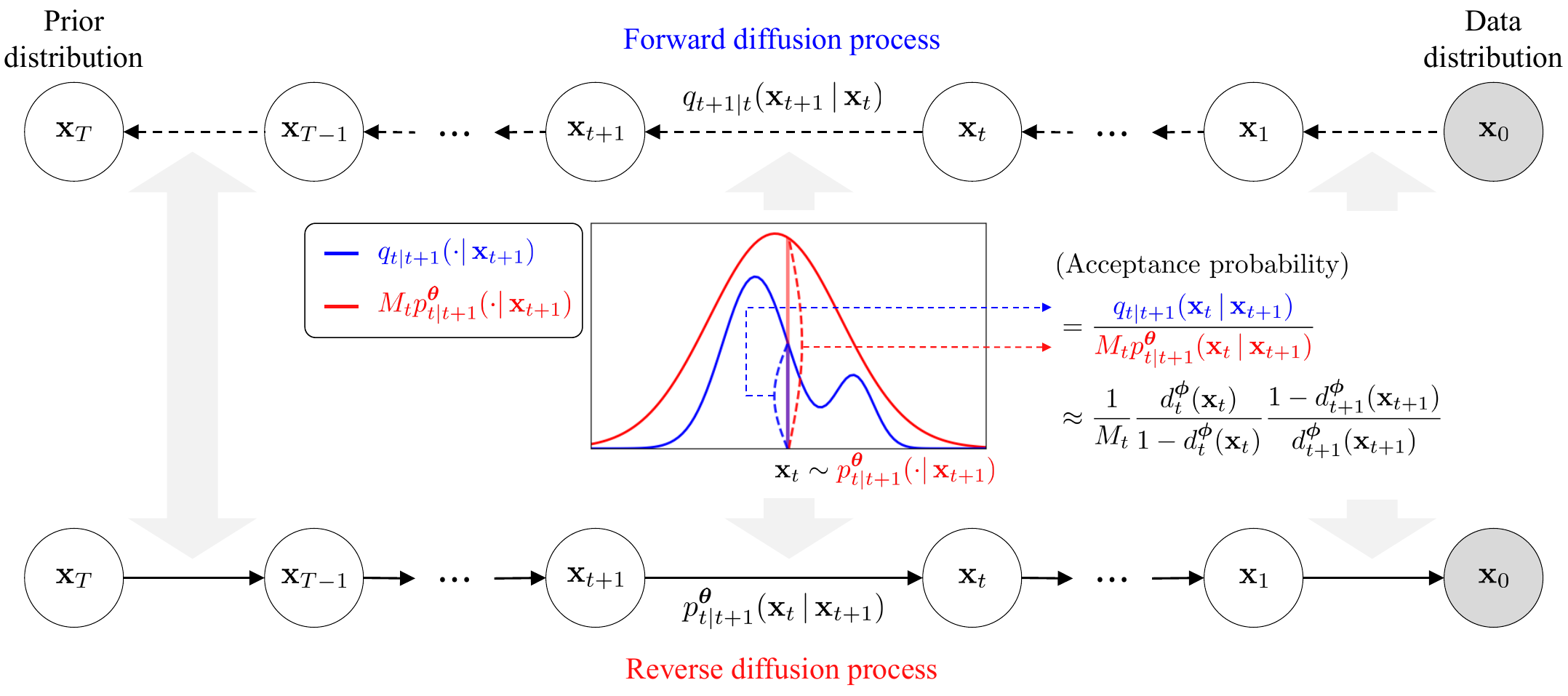}
    \caption{Overview of DiffRS. We sequentially apply the rejection sampling on the pre-trained transition kernel $p^{\boldsymbol{\theta}}_{t|t+1}(\rvx_t |\rvx_{t+1})$ (red) to align the true transition kernel $q_{t|t+1}(\rvx_t | \rvx_{t+1})$ (blue). The acceptance probability is estimated by the time-dependent discriminator $d^{\boldsymbol{\phi}}_t$.}
    \label{fig:overview}
    %\vspace{-1em}
\end{figure*}

\section{Preliminary}
\label{sec:prelim}

\textbf{Diffusion Model}~
Diffusion-based generative models~\cite{ho2020denoising,song2021scorebased,dhariwal2021diffusion} are one of the most prominent deep generative models that aim to approximate the data distribution to the model distribution. This model includes a forward diffusion process that iteratively perturbs the data instances toward the prior distribution, and a corresponding reverse process that inverts the forward process to sample from the modeled distribution.

% The forward diffusion process is usually formulated by a fixed Markov chain that constructs a set of latent variables $\rvx_{1:T}$ by adding Gaussian noises to a sample from the data distribution $q_0(\rvx_0)$~\cite{ho2020denoising}:
The forward process is formulated by a fixed Markov chain that constructs a set of latent variables $\rvx_{1:T}$ by adding Gaussian noises from data distribution $q_0(\rvx_0)$~\cite{ho2020denoising}:
\begin{align}
\label{eq:forward_ddpm}
    q(\rvx_{1:T}|\rvx_0) := \textstyle\prod_{t=1}^{T} q_{t|t-1}(\rvx_t | \rvx_{t-1}),
\end{align}
where $q_{t|t-1}(\rvx_t|\rvx_{t-1}) := \mathcal{N}(\rvx_t; \sqrt{1-\beta_{t}} \rvx_{t-1}, \beta_t \mathbf{I})$ and $\beta_t$ is a variance schedule parameter at time $t$.
Most diffusion models define the reverse process by a Markov chain with a Gaussian transition kernel $p_{t|t+1}(\rvx_{t}|\rvx_{t+1})$:
\begin{align}
\label{eq:backward_ddpm}
    p(\rvx_{0:T}) := p_T(\rvx_{T}) \textstyle\prod_{t=0}^{T-1} p_{t|t+1}(\rvx_{t} | \rvx_{t+1}),
\end{align}
where $p_T(\rvx_T)$ is the prior distribution.
Then, the goal of the diffusion model is to approximate the transition kernel $p_{t|t+1}(\rvx_{t}|\rvx_{t+1})$ by a Gaussian with parameterized mean $\boldsymbol{\mu}^{\boldsymbol{\theta}}$ and time-dependent variance $\sigma_{t+1}^2$,
\begin{align}
\label{eq:kernel}
    p_{t|t+1}^{\boldsymbol{\theta}}(\rvx_{t}|\rvx_{t+1}) := \mathcal{N}(\rvx_{t}; \boldsymbol{\mu}^{\boldsymbol{\theta}}(\rvx_{t+1}, t+1), \sigma_{t+1}^2 \mathbf{I}),
\end{align}
using the objective of variational bound on the log likelihood.
When the parameterized transition kernel $p_{t|t+1}^{\boldsymbol{\theta}}$ is obtained, we proceed with iterative sampling from $T$ to $0$ using \cref{eq:backward_ddpm}, replacing a transition kernel with $p_{t|t+1}^{\boldsymbol{\theta}}$:
\begin{align}
\label{eq:sampling}
    \Resize{\rvx_{t} = \boldsymbol{\mu}^{\boldsymbol{\theta}}(\rvx_{t+1}, t+1) + \sigma_{t+1}^2 \rvz \text{ where } \rvz \sim \mathcal{N}(\rvz;\mathbf{0},\mathbf{I}).}.
\end{align}

% With the continuous-time extension, the diffusion model can be described by Stochastic Differential Equations (SDEs)~\cite{song2021scorebased,song2021maximum}. Specifically, the forward diffusion process is defined by the drift function $\mathbf{f}$ and the diffusion function $g$:
% \begin{align}
% \label{eq:forward_sde}
%     \mathrm{d}\rvx_t = \mathbf{f}(\rvx_t,t)\mathrm{d}t+ g(t)\mathrm{d}\rvw_t, 
% \end{align}
% where $\rvw_t$ is the standard Brownian motion and $t\in[0,T]$ is the continuous time index.
% Also, there exists the corresponding reverse process as follows~\cite{anderson1982reverse}:
% \begin{align}
% \label{eq:backward_sde}
%     \mathrm{d}\rvx_t = [\mathbf{f}(\rvx_t,t) - g^2(t)\nabla_{\rvx_{t}}\log p_t(\rvx_t) ]\mathrm{d}\bar{t}+ g(t)\mathrm{d}\bar{\rvw}_t ,
% \end{align}
% where $\bar{\rvw}_t$ is the reverse-time standard Brownian motion. Then, the diffusion model trains the score network to estimate $\nabla_{\rvx_{t}}\log p_t(\rvx_t)$ through the score matching objective. It should be noted that SDE solutions are completely characterized by the transition kernel, and the transition kernel is the Gaussian distribution when $\mathbf{f}(\cdot, t)$ is affine~\cite{oksendal2013stochastic,sarkka2019applied}.

% \paragraph{Towards High-quality Sample Generation}
% \paragraph{Refining Sampling Process from Pre-trained Diffusion}

\textbf{Refining Sampling Process from Pre-trained Models}
~While most previous methods require training of diffusion models to reduce the sampling error, some recent work has explored refining the sampling process from pre-trained diffusion models.
DG~\cite{kim2023refining} corrects the transition kernel by adding an auxiliary term from the discriminator $d_t^{\boldsymbol{\phi}}$ that distinguishes between real and generated samples:
\begin{align}
    \boldsymbol{\mu}^{\boldsymbol{\theta},\boldsymbol{\phi}}(\rvx_{t}, t) := \boldsymbol{\mu}^{\boldsymbol{\theta}}(\rvx_{t}, t) + \alpha_t \nabla_{\rvx_{t}} \log \tfrac{d^{\boldsymbol{\phi}}_t (\rvx_t)}{1-d^{\boldsymbol{\phi}}_t (\rvx_t)},
\end{align}
where $\alpha_t$ is a time-dependent constant. After that, sampling proceeds with the adjusted transition kernel $\boldsymbol{\mu}^{\boldsymbol{\theta},\boldsymbol{\phi}}$ to reduce the network estimation error.
% In our approach, we also use a fixed pre-trained diffusion model and utilize the discriminator, but our distinctive method involves the application of a rejection sampling scheme.
We also use a fixed pre-trained diffusion and utilize the discriminator, but our distinctive method is the application of a rejection sampling scheme.

In addition, Restart~\cite{xu2023restart} introduces a strategy of repeating the backward and forward steps at fixed time interval $[t_{\text{min}},t_{\text{max}}]$.
Specifically, Restart iteratively samples with a deterministic sampler, such as an ODE sampler, from $T$ to $t_{\text{min}}$. Then, it imposes stochasticity by adding large noise and simulates a reverse process from $t_{\text{max}}$ to $t_{\text{min}}$:
\begin{align}
    (\text{Restart forward}) & ~ \rvx_{t_{\text{max}}}^{i+1} = \rvx^i_{t_{\text{min}}} + \epsilon_{t_{\text{min}} \rightarrow t_{\text{max}}}, \\
    (\text{Restart reverse}) ~& ~ \rvx_{t_{\text{min}}}^{i+1} = \text{ODE}_{\boldsymbol{\theta}}(\rvx^{i+1}_{t_{\text{max}}}, t_{\text{max}} \rightarrow t_{\text{min}}),
\end{align}
where $\epsilon_{t_{\text{min}} \rightarrow t_{\text{max}}}$ denotes the injected noise of the forward process from $t_{\text{min}}$ to $t_{\text{max}}$ and $\text{ODE}_{\boldsymbol{\theta}}$ represents the reverse process using a deterministic sampler from  $t_{\text{max}}$ to $t_{\text{min}}$.
% These forward and reverse processes are repeated, demonstrating an increased contraction effect on accumulated errors.
These processes are repeated, demonstrating an increased contraction effect on accumulated errors.
Our rejection sampling differs in that the timesteps for applying the forward process are determined probabilistically for each sample.

\textbf{Rejection Sampling}~
Rejection sampling is a numerical sampling method to be used when a target distribution $q(\rvx)$ can be evaluated whereas its direct sampling is difficult~\cite{ripley2009stochastic}. For this, we need a proposal distribution $p(\rvx)$ that can be evaluated and from which we can draw samples. We also need to find a constant $M$ satisfying $q(\rvx) \leq Mp(\rvx)$ for all $\rvx$. Then, we accept a sample $\rvx$ drawn from $p(\rvx)$ with probability of ${q(\rvx)}/{Mp(\rvx)}$, and otherwise reject it.

Some work on generative models takes advantage of this rejection sampling scheme. \citet{grover2018variational} use it to improve samples drawn from the variational posterior of the variational autoencoder. \citet{azadi2018discriminator,turner2019metropolis} generate data instances from the generative adversarial network by evaluating the acceptance probability using the discriminator. Compared to previous studies, the sampling of diffusion models is iterative, which requires a sequential rejection sampling method over diffusion timesteps. 

\begin{algorithm}[tb]
%\vspace{-.3em}
    \caption{$\mathtt{OneStepDiffRS}$ ($t, \rvx_{t+1}, L_{t+1}$)}
    \label{alg:onestep}
    \begin{algorithmic}[1]
        \REQUIRE $p_{t|t+1}^{\boldsymbol{\theta}}$, $q_t/p_t^{\boldsymbol{\theta}}$ (or ${d^{\boldsymbol{\phi}}_t }/{[1-d^{\boldsymbol{\phi}}_t ]}$), $M_t$
        \ENSURE $\rvx_{t}$, $L_t$

        \STATE $\rvx_{t} \leftarrow$ None
        \WHILE{$\rvx_{t}$ is None}
            \STATE Sample $\tilde{\rvx}_t$ from the transition kernel $p_{t|t+1}^{\boldsymbol{\theta}} (\cdot | \rvx_{t+1})$
            \STATE Compute $L_t \leftarrow \frac{q_t (\tilde{\rvx}_t)}{p_t^{\boldsymbol{\theta}} (\tilde{\rvx}_t)}$ and $A_t \leftarrow \frac{L_t}{M_t L_{t+1}}$
            \STATE Sample $u \sim \text{Uniform}(0,1)$
            \IF{$u < A_t$}
                \STATE $\rvx_t \leftarrow \tilde{\rvx}_t$
            \ELSE
                \STATE $\rvx_{t+1}, L_{t+1} \leftarrow$ $\mathtt{Re}$-$\mathtt{initialization}$($t+1, \tilde{\rvx}_t$)
            \ENDIF
        \ENDWHILE
    \end{algorithmic}
\end{algorithm}

%\vspace*{-4em}
\section{Methods}
\label{sec:method}

% -- ICMOON
%In this section, we introduce Diffusion Rejection Sampling (DiffRS), a rejection sampling scheme for diffusion models. First, we present a sequential rejection sampling algorithm applied to the transition kernel and derive the acceptance probability. Then, we propose an estimator for the acceptance probability using a discriminator; and we introduce a re-initialization method for rejected samples. Finally, we present a theoretical analysis in terms of statistical distance.
%\vspace{-.3em}
\subsection{Diffusion Rejection Sampling (DiffRS)}
\label{subsec:diffrs}
%\vspace{-.3em}
We assume the existence of a pre-trained diffusion model that allows the generation of samples using the transition kernel $p_{t|t+1}^{\boldsymbol{\theta}}(\rvx_{t}|\rvx_{t+1})$. The distribution of a sample $\rvx_0$ obtained through a sequence of transition samples $\rvx_t|\rvx_{t+1}$, denoted $p_0^{\boldsymbol{\theta}}(\rvx_0)$, may deviate from the true data distribution $q_0(\rvx_0)$ if the pre-trained transition kernel $p_{t|t+1}^{\boldsymbol{\theta}}$ differs from the true transition kernel $q_{t|t+1}$. Consequently, we apply a rejection sampling scheme for each timestep in the transition kernel to mitigate this discrepancy, as described in \cref{fig:overview}.

Conceptually, DiffRS performs the rejection sampling of the transition probability in reverse diffusion, $p_{t|t+1}^{\boldsymbol{\theta}}$.\footnote{It should be noted that the rejection sampling is imposed on the transition probability, $p_{t|t+1}^{\boldsymbol{\theta}}$; not its marginal probability, $p_t^{\boldsymbol{\theta}}$.} During the generation procedure, the sampling means selecting an instance from $p_{t|t+1}^{\boldsymbol{\theta}}$, which follows a Gaussian of \cref{eq:kernel}, so it can perform as a proposal distribution of the rejection sampling. Meanwhile, the ordinary forward diffusion, $q_{t+1|t}$, follows a Gaussian distribution; but its reverse-time version, $q_{t|t+1}$, does not follow a Gaussian distribution, which becomes the target distribution of the rejection sampling.

To formulate DiffRS, let $q_t(\rvx_t)$ and $p_t^{\boldsymbol{\theta}}(\rvx_t)$ represent the marginal distributions of the forward diffusion process starting from $q_0(\rvx_0)$ and $p_0^{\boldsymbol{\theta}}(\rvx_0)$, respectively. We introduce a one-step DiffRS procedure from $t+1$ to $t$ to obtain a sample $\rvx_{t}$ from $q_t$, given a sample $\rvx_{t+1}$ from $q_{t+1}$. This procedure can be applied sequentially from $T-1$ to $0$, yielding a sample from the data distribution $q_0$.

\textbf{Proposal Distribution}~
At time $t+1$, we assume that we have a sample $\rvx_{t+1}$ drawn from the perturbed data distribution $q_{t+1}(\rvx_{t+1})$ through the sampling iterations from $T$ to $t+1$. Then, a sample $\rvx_t$ at time $t$ can be drawn using the pre-trained transition kernel $p_{t|t+1}^{\boldsymbol{\theta}}$ following the generative reverse process by \cref{eq:sampling}. Our goal is to ensure that the sampling closely follows the true transition kernel $q_{t|t+1}$ (blue in \cref{fig:overview}). This is achieved by applying the rejection sampling, where the proposal distribution is set by the pre-trained transition kernel $p_{t|t+1}^{\boldsymbol{\theta}}$ (red in \cref{fig:overview}).

\textbf{Acceptance Probability}~
To implement the rejection sampling scheme on the transition kernel, we need to compute the acceptance probability $A_{t}(\rvx_t,\rvx_{t+1})$, which is expressed as the ratio of the true and pre-trained transition kernel:
\begin{align}
\label{eq:accept1}
    A_{t}(\rvx_t,\rvx_{t+1}) := \frac{1}{M_t} \frac{q_{t|t+1}(\rvx_t|\rvx_{t+1})}{ p_{t|t+1}^{\boldsymbol{\theta}}(\rvx_t|\rvx_{t+1})},
\end{align}
where $M_t$ is a constant that satisfies $q_{t|t+1}(\rvx_t|\rvx_{t+1}) \leq M_t p_{t|t+1}^{\boldsymbol{\theta}}(\rvx_t|\rvx_{t+1})$ for all $\rvx_t$ and $\rvx_{t+1}$. The density ratio can be further derived as follows:
\begin{small}
\begin{align}
\label{eq:accept2}
    \frac{q_{t|t+1}(\rvx_t|\rvx_{t+1})}{ p_{t|t+1}^{\boldsymbol{\theta}}(\rvx_t|\rvx_{t+1})} & = \frac{q_{t+1|t}(\rvx_{t+1}|\rvx_{t})}{p_{t+1|t}(\rvx_{t+1}|\rvx_{t})} \frac{q_t(\rvx_t)}{p_{t}^{\boldsymbol{\theta}} (\rvx_t)} \frac{ p_{t+1}^{\boldsymbol{\theta}}(\rvx_{t+1})}{q_{t+1} (\rvx_{t+1})} \nonumber \\
    & = \frac{q_t(\rvx_t)}{p_{t}^{\boldsymbol{\theta}} (\rvx_t)} \frac{ p_{t+1}^{\boldsymbol{\theta}}(\rvx_{t+1})}{q_{t+1} (\rvx_{t+1})} = \frac{L_t(\rvx_t)}{L_{t+1}(\rvx_{t+1})},
\end{align}
\end{small}

where $L_t(\rvx_t) := \frac{q_t(\rvx_t)}{p_{t}^{\boldsymbol{\theta}} (\rvx_t)}$. The first equality holds by Bayes' rule, and we use the fact that the perturbed kernels, $q_{t+1|t}$ and $p_{t+1|t}$, are the same for the second equality. Therefore, the acceptance probability $A_t(\rvx_t, \rvx_{t+1})$ of the one-step DiffRS at time $t$ can be expressed as follows:
\begin{align}
\label{eq:accept3}
    A_t(\rvx_t, \rvx_{t+1}) = \frac{L_t(\rvx_t)}{M_t L_{t+1}(\rvx_{t+1})}.
\end{align}
$L_t(\rvx_t)$ is estimated by the density ratio estimation via a discriminator $d^{\boldsymbol{\phi}}_t$, which will be discussed in \cref{subsec:accept}.

\textbf{Algorithm of One-step DiffRS}~
We formulate a one-step DiffRS procedure in \cref{alg:onestep}. Note that Re-initialization (line 9 in \cref{alg:onestep}) refers to the process of drawing a new sample at timestep $t+1$ after a rejection, which we will explain further in \cref{subsec:reinit}.

\subsection{Estimation of the Acceptance Probability}
\label{subsec:accept}

As indicated in \cref{eq:accept3}, the acceptance probability is expressed as the ratio of the likelihood ratios at time $t$ and $t+1$. Therefore, if we can estimate the likelihood ratio $L_t(\rvx_t)$ at each timestep, we can compute the acceptance probability. Following the approach of DG~\cite{kim2023refining}, we estimate this ratio using a time-dependent discriminator, denoted by $d^{\boldsymbol{\phi}}_t$. This discriminator is designed to distinguish between samples of $q_t$ and $p_t^{\boldsymbol{\theta}}$ at all timesteps.

To train the discriminator, we generate the samples of $p_0^{\boldsymbol{\theta}}$ using \cref{eq:sampling} with the pre-trained diffusion model. The training objective is the time-weighted binary cross-entropy loss using the real and generated samples:

\begin{small}
    \vspace{-1em}
    \begin{align}
    \label{eq:bce}
        \mathcal{L}_{\text{BCE}}({\boldsymbol{\phi}}) := & \E_t \Big [ \lambda(t) \E_{q_0(\rvx_0)q_{t|0}(\rvx_t|\rvx_0)} \big [-\log d^{\boldsymbol{\phi}}_t(\rvx_t) \big ] \nonumber \\
        & + \E_{p_0^{\boldsymbol{\theta}}(\rvx_0)q_{t|0}(\rvx_t|\rvx_0)}\big [-\log (1-d^{\boldsymbol{\phi}}_t(\rvx_t) ) \big]\Big ],
    \end{align}
\end{small}

where $\lambda(t)$ is the temporal weighting function.
Then, the optimal discriminator $d^{\boldsymbol{\phi}^*}_t$ satisfies the following equations:
\begin{small}
\begin{align}
\label{eq:opt_disc}
    d^{\boldsymbol{\phi}^*}_t (\rvx_t) = \frac{q_t(\rvx_t)}{q_t(\rvx_t) + p_t^{\boldsymbol{\theta}}(\rvx_t)}; L_t(\rvx_t) = \frac{q_t(\rvx_t)}{p_t^{\boldsymbol{\theta}}(\rvx_t)} = 
    \frac{d^{\boldsymbol{\phi}^*}_t (\rvx_t)}{1-d^{\boldsymbol{\phi}^*}_t (\rvx_t)}.
\end{align}
%\vspace{-1.5em}
\end{small}

Therefore, using the time-dependent discriminator $d^{\boldsymbol{\phi}}_t$, we derive the estimators $\hat{L}_t^{\boldsymbol{\phi}}$ and $\hat{A}_t^{\boldsymbol{\phi}}$ for the ratio $L_t$ and the acceptance probability $A_t$, respectively:
\begin{align}
\label{eq:est_disc}
    {L}_t(\rvx_t) & \approx \hat{L}_t^{\boldsymbol{\phi}}(\rvx_t)  := 
    \frac{d^{\boldsymbol{\phi}}_t (\rvx_t)}{1-d^{\boldsymbol{\phi}}_t (\rvx_t)}, \\
    A_t(\rvx_t,\rvx_{t+1}) & \approx \hat{A}_t^{\boldsymbol{\phi}}(\rvx_t, \rvx_{t+1}) := \frac{1}{M_t} \frac{ \hat{L}_t^{\boldsymbol{\phi}}(\rvx_t)}{ \hat{L}_{t+1}^{\boldsymbol{\phi}}(\rvx_{t+1})}.
\end{align}

\begin{figure}[t]
    %\vspace{-1em}
    \centering
    \includegraphics[width=0.95\linewidth]{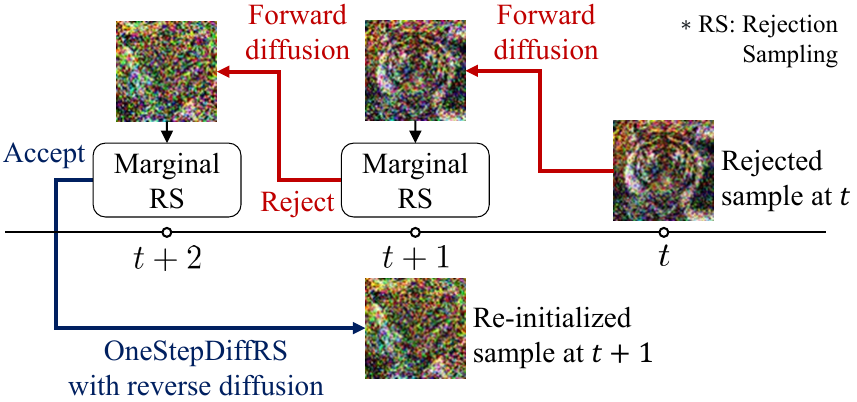}
    %\vspace{-.5em}
    \caption{Overview of the proposed re-initialization.}
    \label{fig:reinit}
    %\vspace{-.5em}
\end{figure}

\begin{algorithm}[tb]
   \caption{$\mathtt{Re}$-$\mathtt{initialization}$($t+1, \tilde{\rvx}_t$)}
   \label{alg:reinit}
\begin{algorithmic}[1]
    \REQUIRE $q_{t+1|t}$, $q_{t+1}/p_{t+1}^{\boldsymbol{\theta}}$ (or ${d^{\boldsymbol{\phi}}_{t+1} }/{[1-d^{\boldsymbol{\phi}}_{t+1} ]}$), $\tilde{M}_{t+1}$
    \ENSURE $\rvx_{t+1}$, $L_{t+1}$

    \STATE Sample $\tilde{\rvx}_{t+1}$ from the forward process $q_{t+1|t} (\cdot | \rvx_{t})$
    \STATE Compute ${L}_{t+1} \leftarrow \frac{q_{t+1} (\tilde{\rvx}_{t+1})}{p_{t+1}^{\boldsymbol{\theta}} (\tilde{\rvx}_{t+1})}$ and $\tilde{A}_{t+1} \leftarrow \frac{{L}_{t+1}}{\tilde{M}_{t+1}}$
    \STATE Sample $u \sim \text{Uniform}(0,1)$
    \IF{($u < \tilde{A}_{t+1}$) or ($t+1 == T$)}
        \STATE $\rvx_{t+1} \leftarrow \tilde{\rvx}_{t+1}$
    \ELSE
        \STATE $\rvx_{t+2}, L_{t+2} \leftarrow$ $\mathtt{Re}$-$\mathtt{initialization}$($t+2, \tilde{\rvx}_{t+1}$)
        \STATE $\rvx_{t+1}, L_{t+1} \leftarrow$ $\mathtt{OneStepDiffRS}$($t+1, \rvx_{t+2}, L_{t+2}$)
    \ENDIF
\end{algorithmic}
\end{algorithm}
%\vspace{-1.5em}
\subsection{Re-initialization}
\label{subsec:reinit}

The primary challenge associated with the rejection sampling is the increased number of sampling iterations caused by rejections. This problem is particularly exacerbated in diffusion models that use iterative sampling, since rejections require resampling starting from the timestep $T$. To mitigate this challenge, we introduce a re-initialization method tailored for diffusion models, utilizing rejected samples.

Motivated by the observation from Restart~\cite{xu2023restart} that incorporating the forward process into the sampling process reduces the accumulated error, we add noise to the rejected samples $\rvx_t$. Unlike Restart, we inject different amounts of noise for each sample based on the likelihood ratio information we already have, as illustrated in \cref{fig:reinit}.

Specifically, we first apply a one-step forward transition $q_{t+1|t}$ to the rejected sample $\rvx_t$ at time $t$ to obtain the candidate sample $\rvx_{t+1}$ at time $t+1$. Then, we apply an additional rejection sampling procedure to the candidate sample $\rvx_{t+1}$ based on the marginal distributions $q_{t+1}$ and $p_{t+1}^{\boldsymbol{\theta}}$. If the sample is rejected again, we iterate through the one-step forward transition and the marginal rejection sampling. Consequently, the intensity of the noise is adjusted based on the probability that a rejected sample is drawn from the true distribution. We present this re-initialization procedure in \cref{alg:reinit}. Empirically, we find that this re-initialization procedure leads to effective and efficient sample generation.

\begin{algorithm}[tb]
    \caption{Diffusion Rejection Sampling (DiffRS)}
    \label{alg:diffrs}
    \begin{algorithmic}[1]

    \STATE $\rvx_T \leftarrow$ None
    \WHILE{$\rvx_{T}$ is None}
        \STATE Sample $\tilde{\rvx}_T$ from the prior distribution $p_{T}(\rvx_T)$
        \STATE Compute ${L}_{T} \leftarrow \frac{q_{T} (\rvx_{T})}{p_{T} (\rvx_{T})}$ and $\tilde{A}_{T} \leftarrow \frac{{L}_{T}}{\tilde{M}_{T}}$
        \STATE Sample $u \sim \text{Uniform}(0,1)$
        \IF{$u < \tilde{A}_{T}$}
            \STATE $\rvx_{T} \leftarrow \tilde{\rvx}_{T}$
        \ENDIF
    \ENDWHILE
    \FOR{$t=T-1$ {\bfseries to} $0$}
        \STATE $\rvx_t, L_t \leftarrow$ $\mathtt{OneStepDiffRS}$($t, \rvx_{t+1}, L_{t+1}$)
    \ENDFOR
\end{algorithmic}
\end{algorithm}

\begin{figure*}[t]
    %\vspace{-.5em}
    \centering
    \includegraphics[width=0.9\linewidth]{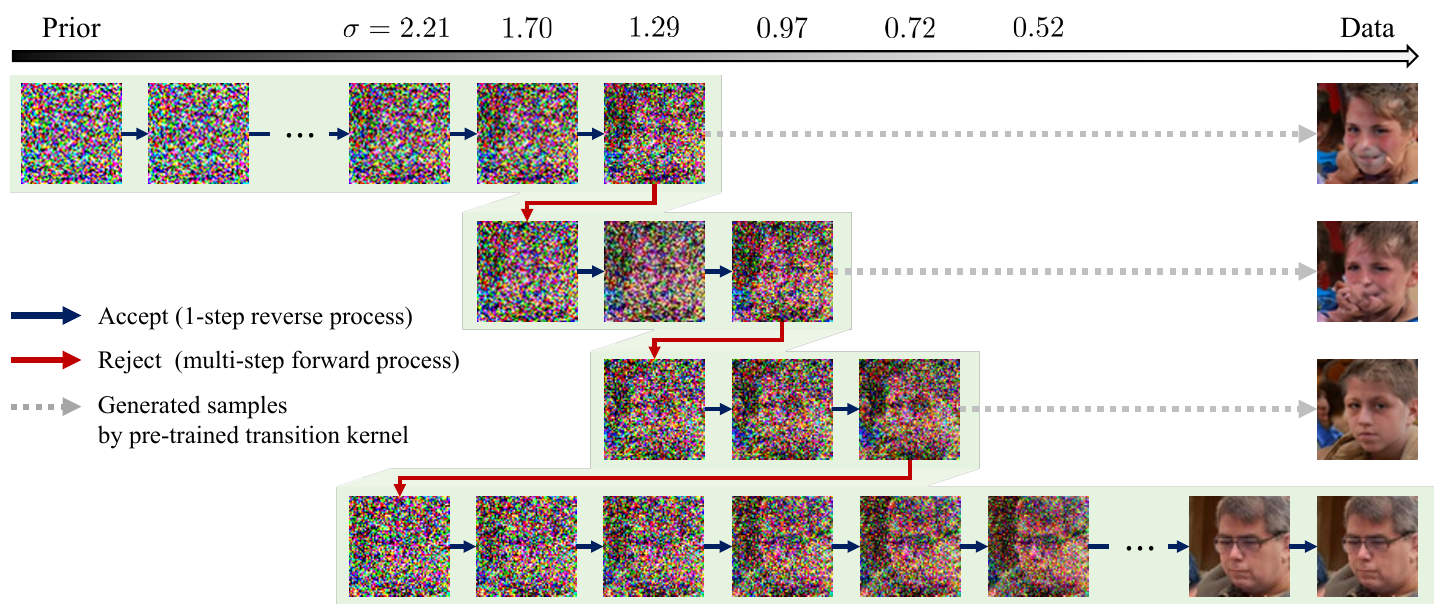}
    \caption{Illustration of the sampling process for DiffRS. The path with the green background represents the DiffRS sampling process, and the rightmost images are generated from the intermediate images using a base sampler without rejection. Timesteps are expressed as the noise level $\sigma$ from the EDM scheme~\cite{karras2022elucidating}.}
    \label{fig:process}
    %\vspace{-.5em}
\end{figure*}

\subsection{Overall Algorithm}
\label{subsec:overall}

\cref{alg:diffrs} presents the overall algorithm of DiffRS. First, we sample $\rvx_T$ from the prior distribution $p_T$ and then perform the marginal rejection sampling with the acceptance probability $A_T(\rvx_T) = \frac{q_T(\rvx_T)}{M_T p_T(\rvx_T)}$ (lines 1-9). This process aims to bring the prior distribution closer to $q_T$, thereby reducing the prior mismatch error. Subsequently, we iteratively apply the one-step DiffRS from $T-1$ to $0$ (lines 10-12), ultimately obtaining a sample $\rvx_0$ on the data space.

\cref{fig:process} visually illustrates the DiffRS process, highlighted with a green background. The rightmost images show the generated samples when continuing to sample from the intermediate images without rejection. The sample is refined by finding new sampling paths through rejection.

It is important to note that DiffRS can enhance sample quality for most diffusion samplers. A necessary condition is that the sampler aims to sample from the true perturbed data distribution $q_t(\rvx_t)$ at time $t$. This condition holds true for most samplers, including diffusion distillation methods.

\textbf{Practical Consideration}~
The implementation of DiffRS requires the determination of the rejection constant $M_t$. It should be noted that $M_t$ exists for all $t$, since the diffusion process is based on a Gaussian distribution, so the support of the transition kernels becomes the entire space. However, finding an exact value for $M_t$ is nearly impossible, and even if it were possible, it would be computationally intractable in practice. In accordance with previous research~\cite{azadi2018discriminator}, we determine $M_t$ as follows: we store the ratio $\frac{\hat{L}_t^{\boldsymbol{\phi}}(\rvx_t)}{\hat{L}_{t+1}^{\boldsymbol{\phi}}(\rvx_{t+1})}$ of samples from the base sampler and select the $\gamma$\textsuperscript{th} percentile of these stored values as $M_t$. We apply this method similarly to the marginal rejection sampling.

% \paragraph{Rejection Constant $M_t$}
% DiffRS requires the rejection constant $M_t$ to satisfy $q_{t|t+1}(\rvx_t|\rvx_{t+1}) \leq M_t p_{t|t+1}^{\boldsymbol{\theta}}(\rvx_t|\rvx_{t+1})$ for all $\rvx_t, \rvx_{t+1}$. The existence of such $M_t$ is verified by the property of the Gaussian transition kernel.
% \begin{restatable}{proposition}{propa}
% \label{prop:reject}
%     Given a Gaussian transition kernel $p_{t|t+1}^{\boldsymbol{\theta}}$, the rejection constant $M_t$ exists for all $t$, i.e., $M_t < \infty$.
%     \begin{proof}
%         Given $t$ and $\rvx_{t+1}$, $p_{t|t+1}^{\boldsymbol{\theta}}(\rvx_t|\rvx_{t+1})>0$ for all $\rvx_t$ because $p_{t|t+1}^{\boldsymbol{\theta}}$ is a Gaussian distribution. Therefore, there exists $\tilde{\rvx}_t$ such that $\arg\max_{\rvx_t} \frac{q_{t|t+1}(\rvx_t|\rvx_{t+1})}{p_{t|t+1}^{\boldsymbol{\theta}} (\rvx_t | \rvx_{t+1})}$. Therefore, $M_t = \frac{q_{t|t+1}(\tilde{\rvx}_t|\rvx_{t+1})}{p_{t|t+1}^{\boldsymbol{\theta}} (\tilde{\rvx_t} | \rvx_{t+1})}$.
%     \end{proof}
% \end{restatable}

\subsection{Theoretical Analysis}
\label{subsec:theory}

We provide a theoretical analysis of the DiffRS algorithm based on distribution divergence. \citet{ho2020denoising} derived the upper bound of the Kullback-Leibler (KL) divergence between the data distribution $q_0(\rvx_0)$ and the pre-trained distribution $p_0^{\boldsymbol{\theta}}(\rvx_0)$ in diffusion models:
\begin{align}
\label{eq:elbo}
    D_{\text{KL}} (q_0|| p_0^{\boldsymbol{\theta}}) & \leq D_{\text{KL}}(q_T|| p_T^{\boldsymbol{\theta}})  \\
    & + \sum_{t=0}^{T-1} \mathbb{E}_{q_{t+1}} \Big [ D_{\text{KL}}(q_{t|t+1} || p_{t|t+1}^{\boldsymbol{\theta}}) \Big ] =: J(\boldsymbol{\theta}). \nonumber
\end{align}

Therefore, to minimize the KL divergence on the data space, we need to match prior distributions, $q_T$ and $p^{\boldsymbol{\theta}}_T$; and transition kernels, $q_{t|t+1}$ and $p^{\boldsymbol{\theta}}_{t|t+1}$; which is the purpose of DiffRS.

For further theoretical analysis, let $p_{*}^{\boldsymbol{\theta},\boldsymbol{\phi}}$ be the distribution refined by DiffRS. We also define the unnormalized acceptance probability $\bar{A}^{\boldsymbol{\phi}}_t := M_t \hat{A}_t^{\boldsymbol{\phi}} =\frac{ \hat{L}_t^{\boldsymbol{\phi}}(\rvx_t)}{ \hat{L}_{t+1}^{\boldsymbol{\phi}}(\rvx_{t+1})}$. Then, the refined prior distribution and the refined transition kernels of DiffRS are expressed by the pre-trained distribution and the acceptance probability:
%\vspace{0em}
\begin{align}
    p_{T}^{\boldsymbol{\theta},\boldsymbol{\phi}}(\rvx_T) & = p_{T}^{\boldsymbol{\theta}}(\rvx_T) \bar{A}_T^{\boldsymbol{\phi}}(\rvx_T),  \label{eq:refine1} \\
    p_{t|t+1}^{\boldsymbol{\theta},\boldsymbol{\phi}}(\rvx_t|\rvx_{t+1}) & = p_{t|t+1}^{\boldsymbol{\theta}}(\rvx_t|\rvx_{t+1}) \bar{A}_t^{\boldsymbol{\phi}}(\rvx_t,\rvx_{t+1}). \label{eq:refine2}
\end{align}

\cref{thm:main} formulates the upper bound of the KL divergence between the data and refined distribution.

\begin{restatable}{theorem}{thma}
\label{thm:main}
    The KL divergence between data distribution $q_0$ and refined distribution $p_{0}^{\boldsymbol{\theta},\boldsymbol{\phi}}$ is bounded by:
    \begin{align}
    \label{eq:refine_elbo}
        D_{\text{KL}} (q_0|| p_0^{\boldsymbol{\theta},\boldsymbol{\phi}})  \leq &~ J(\boldsymbol{\theta}) + R(\boldsymbol{\phi}) =: J(\boldsymbol{\theta},\boldsymbol{\phi}),
    \end{align}
    where $R(\boldsymbol{\phi}) := \mathbb{E}_{q_{T}} [ - \log \bar{A}_T^{\boldsymbol{\phi}} ] + \sum_{t=0}^{T-1} \mathbb{E}_{q_{t,t+1}} [ - \log \bar{A}_{t}^{\boldsymbol{\phi}} ]$.
    Moreover, this bound attains equality for the optimal $\boldsymbol{\phi}^*$, and in such cases the value becomes $0$.
\end{restatable}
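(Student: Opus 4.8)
The plan is to lift the marginal divergence $D_{\text{KL}}(q_0\|p_0^{\boldsymbol{\theta},\boldsymbol{\phi}})$ to a divergence over the whole trajectory $\rvx_{0:T}$, decompose that joint divergence by the chain rule exactly as in the derivation behind \cref{eq:elbo}, and then read off $J(\boldsymbol{\theta})+R(\boldsymbol{\phi})$. Concretely, I would first write the forward law in its reverse-conditional form $q(\rvx_{0:T})=q_T(\rvx_T)\prod_{t=0}^{T-1}q_{t|t+1}(\rvx_t|\rvx_{t+1})$ and the refined reverse law as $p^{\boldsymbol{\theta},\boldsymbol{\phi}}(\rvx_{0:T})=p_T^{\boldsymbol{\theta},\boldsymbol{\phi}}(\rvx_T)\prod_{t=0}^{T-1}p_{t|t+1}^{\boldsymbol{\theta},\boldsymbol{\phi}}(\rvx_t|\rvx_{t+1})$, whose $\rvx_0$-marginals are $q_0$ and $p_0^{\boldsymbol{\theta},\boldsymbol{\phi}}$. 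Since marginalization cannot increase KL (the data-processing inequality), $D_{\text{KL}}(q_0\|p_0^{\boldsymbol{\theta},\boldsymbol{\phi}})\le D_{\text{KL}}(q(\rvx_{0:T})\|p^{\boldsymbol{\theta},\boldsymbol{\phi}}(\rvx_{0:T}))$; this single step is the only source of slack in the bound.

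Next I would expand the joint KL by the chain rule and substitute the refinement identities \cref{eq:refine1,eq:refine2}. Each log-ratio then splits as $\log\frac{q}{p^{\boldsymbol{\theta}}}-\log\bar{A}^{\boldsymbol{\phi}}$. The $\log\frac{q}{p^{\boldsymbol{\theta}}}$ pieces reassemble, using $\mathbb{E}_{q_{t+1}}\mathbb{E}_{q_{t|t+1}}=\mathbb{E}_{q_{t,t+1}}$, into $D_{\text{KL}}(q_T\|p_T^{\boldsymbol{\theta}})+\sum_t\mathbb{E}_{q_{t+1}}[D_{\text{KL}}(q_{t|t+1}\|p_{t|t+1}^{\boldsymbol{\theta}})]=J(\boldsymbol{\theta})$, while the $-\log\bar{A}^{\boldsymbol{\phi}}$ pieces are exactly $R(\boldsymbol{\phi})$. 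Thus $D_{\text{KL}}(q(\rvx_{0:T})\|p^{\boldsymbol{\theta},\boldsymbol{\phi}}(\rvx_{0:T}))=J(\boldsymbol{\theta})+R(\boldsymbol{\phi})$, and combined with the data-processing step this yields \cref{eq:refine_elbo}. A useful cross-check I would record is that, because $\bar A_t^{\boldsymbol{\phi}}=\hat L_t^{\boldsymbol{\phi}}/\hat L_{t+1}^{\boldsymbol{\phi}}$ and $\bar A_T^{\boldsymbol{\phi}}=\hat L_T^{\boldsymbol{\phi}}$, the product $\bar A_T^{\boldsymbol{\phi}}\prod_{t}\bar A_t^{\boldsymbol{\phi}}$ telescopes to $\hat L_0^{\boldsymbol{\phi}}(\rvx_0)$, so $p^{\boldsymbol{\theta},\boldsymbol{\phi}}(\rvx_{0:T})=\hat L_0^{\boldsymbol{\phi}}(\rvx_0)\,p^{\boldsymbol{\theta}}(\rvx_{0:T})$ and $R(\boldsymbol{\phi})=-\mathbb{E}_{q_0}[\log\hat L_0^{\boldsymbol{\phi}}]$; this reproduces the same decomposition and makes the marginal $p_0^{\boldsymbol{\theta},\boldsymbol{\phi}}=\hat L_0^{\boldsymbol{\phi}}\,p_0^{\boldsymbol{\theta}}$ explicit.

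For the equality-and-zero claim I would substitute the optimal discriminator. By \cref{eq:opt_disc} the estimator is exact, $\hat L_t^{\boldsymbol{\phi}^*}=q_t/p_t^{\boldsymbol{\theta}}$, so by the density-ratio identity \cref{eq:accept2} the unnormalized acceptances collapse to the true ratios, $\bar A_t^{\boldsymbol{\phi}^*}=q_{t|t+1}/p_{t|t+1}^{\boldsymbol{\theta}}$ and $\bar A_T^{\boldsymbol{\phi}^*}=q_T/p_T^{\boldsymbol{\theta}}$. Plugging into \cref{eq:refine1,eq:refine2} gives $p_T^{\boldsymbol{\theta},\boldsymbol{\phi}^*}=q_T$ and $p_{t|t+1}^{\boldsymbol{\theta},\boldsymbol{\phi}^*}=q_{t|t+1}$, hence $p^{\boldsymbol{\theta},\boldsymbol{\phi}^*}(\rvx_{0:T})=q(\rvx_{0:T})$ and both the joint and marginal divergences vanish; equivalently $R(\boldsymbol{\phi}^*)=-J(\boldsymbol{\theta})$, so $J(\boldsymbol{\theta},\boldsymbol{\phi}^*)=0$ and the bound $0\le 0$ is tight.

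The step I expect to demand the most care is the status of $p^{\boldsymbol{\theta},\boldsymbol{\phi}}$ as a genuine probability law for a non-optimal $\boldsymbol{\phi}$: the telescoped form shows $p^{\boldsymbol{\theta},\boldsymbol{\phi}}(\rvx_{0:T})$ integrates to $\mathbb{E}_{p_0^{\boldsymbol{\theta}}}[\hat L_0^{\boldsymbol{\phi}}]$, which need not equal $1$ unless $\boldsymbol{\phi}=\boldsymbol{\phi}^*$. I would therefore need to fix the convention that \cref{eq:refine1,eq:refine2} define the (idealized) refined reverse chain, so that the chain-rule decomposition introduces no hidden per-step normalizers, or argue that the rejection-plus-re-initialization procedure realizes exactly these reweighted kernels. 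With that convention in place the data-processing inequality and the decomposition above go through verbatim, and at the optimum normalization is automatic, which is precisely why the bound becomes an equality there.
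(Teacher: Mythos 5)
Your proposal is correct and follows essentially the same route as the paper's proof: the paper's variational (ELBO) step is exactly your data-processing inequality lifting the marginal KL to the trajectory-level KL, and its subsequent decomposition—substituting the refinement identities \cref{eq:refine1,eq:refine2} to split each term into the pre-trained KL plus a $-\log \bar{A}^{\boldsymbol{\phi}}$ term, then regrouping into $J(\boldsymbol{\theta})+R(\boldsymbol{\phi})$ and invoking the optimal discriminator to make every term vanish—matches your argument step for step. Your telescoping identity $R(\boldsymbol{\phi})=-\mathbb{E}_{q_0}[\log \hat{L}_0^{\boldsymbol{\phi}}]$ and your explicit flagging of the normalization convention behind \cref{eq:refine1,eq:refine2} are useful additions the paper does not spell out, but they do not constitute a different proof.
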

% \footnote{It should be noted that $\bar{A}^{\boldsymbol{\phi}}$ is not a probability distribution.}
The proof is in \cref{sec:app_proof}. If the discriminator is completely indistinguishable, i.e., $d_t^{\boldsymbol{\phi}} \equiv 0.5$ for all $t$, then  $R(\boldsymbol{\phi})=0$ because $\bar{A}_t^{\boldsymbol{\phi}} \equiv 1$ for all $t$, indicating that all instances are accepted in the rejection sampling process. Therefore, the refined distribution from DiffRS is same as the distribution from the pre-trained diffusion model. As the discriminator is trained, $R(\boldsymbol{\phi})$ converges to $-J(\boldsymbol{\theta}) (\leq0)$ according to \cref{thm:main}, making the upper bound for DiffRS tighter than that for the pre-trained diffusion model.

\section{Experiments}
\label{sec:exp}

In this section, we empirically validate the proposed method, DiffRS. First, we conduct experiments on standard benchmark datasets for image generation tasks, such as CIFAR-10~\cite{krizhevsky2009learning}, and ImageNet 64$\times$64 and 256$\times$256~\cite{deng2009imagenet}. Next, we present the analysis of DiffRS and its applicability to fast diffusion samplers. Finally, we perform experiments on large-scale text-conditional image generation using Stable Diffusion~\cite{rombach2022high} with a resolution of 512$\times$512.

\begin{table}[t]
    \centering
    \caption{Performance comparison on CIFAR-10. The values in the first block are taken from the original paper.}
    \vskip 0.1in
    \adjustbox{max width=\linewidth}{%
    \begin{tabular}{lcccc}
        \toprule
        \multirow{2}{*}[-0.5\dimexpr \aboverulesep + \belowrulesep + \cmidrulewidth]{Model} & \multicolumn{2}{c}{Unconditional} & \multicolumn{2}{c}{Conditional} \\
        \cmidrule(lr){2-3} \cmidrule(lr){4-5}
        & FID$\downarrow$ & NFE$\downarrow$ & FID$\downarrow$ & NFE$\downarrow$ \\
        \midrule
        DDPM~\cite{ho2020denoising} & 3.17 & 1000     & - & - \\
        DDIM~\cite{song2021denoising} & 4.16 & 100     & - & - \\
        ScoreSDE~\cite{song2021scorebased} & 2.20 & 2000     & - & - \\
        iDDPM~\cite{nichol2021improved} & 2.90 & 1000     & - & - \\
        % VDM~\cite{kingma2021variational} & 7.41 & 1000     & - & - \\
        LSGM~\cite{vahdat2021score} & 2.10 & 138     & - & - \\
        CLD-SGM~\cite{dockhorn2022scorebased} & 2.25 & 312     & - & - \\
        STF~\cite{xu2022stable}    & 1.90 & 35     & - & -     \\
        ST~\cite{kim2022soft} & 2.33 & 2000     & - & - \\
        PFGM~\cite{xu2022poisson} & 2.35 & 110     & - & - \\
        INDM~\cite{kim2022maximum} & 2.28 & 2000     & - & - \\
        PFGM++~\cite{xu2023pfgm++} & 1.93 & 35     & - & - \\
        PSLD~\cite{pandey2023complete}    & 2.10 & 246     & - & -     \\
        % DiffiT~\cite{hatamizadeh2023diffit}    & 1.95 & 35     & - & -     \\
        % EDM~\cite{karras2022elucidating}   & 1.97 & 35     & 1.79 & 35     \\
        ES~\cite{ning2024elucidating} & 1.95 & 35     & 1.80 & 35     \\
        \midrule
        \multirow{2}{*}{EDM (Heun)~\cite{karras2022elucidating}}  & 2.01 & 35     & 1.83 & 35     \\
                                                            & 2.03 & 65     & 1.90 & 89     \\
        \cmidrule{1-1}
        \multirow{2}{*}{EDM+DG~\cite{kim2023refining}}          & \underline{1.78} & 35     & \underline{1.66} & 35     \\
                                                            & 1.90 & 65     & 1.72 & 89     \\
        \cmidrule{1-1}
        \multirow{2}{*}{EDM+Restart~\cite{xu2023restart}}       & 1.95 & 43     & 1.85 & 43     \\
                                                            & 1.93 & 65     & 1.90 & 89     \\
        \cmidrule{1-1}
        \bf{EDM+DiffRS (ours)}  & \bf{1.59} & 64.06  & \bf{1.52} & 88.22  \\
        \bottomrule
    \end{tabular}
    }
    \label{tab:cifar10_SOTA_all}
\end{table}
\textbf{Experimental Setting}~
We primarily use the pre-trained networks on CIFAR-10 and ImageNet 64$\times$64 from EDM~\cite{karras2022elucidating}, which is known for the superior performance of the pre-trained models. For ImageNet 256$\times$256, we use the checkpoint from DiT~\cite{peebles2023scalable}. Additional results on other datasets (e.g., FFHQ~\cite{karras2019style}, AFHQv2~\cite{choi2020stargan}) and networks (e.g., DDPM++ cont.~\cite{song2021scorebased}) are provided in \cref{sec:app_add_exp}. All settings related to the discriminator are identical to \citet{kim2023refining}, which is provided in \cref{subsec:app_disc_set}. Note that the process of sampling from a pre-trained model and training a discriminator requires significantly less time and memory than training a diffusion model. Further experimental details are specified in \cref{sec:app_exp_set}. We mainly evaluate the generation performance using the Fréchet Inception Distance (FID)~\cite{heusel2017gans} on 50K samples, and we report the number of function evaluations (NFE) on the diffusion network. In the case of DiffRS, the NFE varies for each sample, so we take the average NFE of the samples. 
% Note that training the discriminator takes only a few minutes because it only involves fine-tuning the shallow U-Net encoder.

\begin{table}[t]
    \centering
    \caption{Performance comparison on class-conditional ImageNet 64$\times$64. The values in the first block are from the original paper.}
    \vskip 0.1in
    \adjustbox{max width=\linewidth}{%
    \begin{tabular}{lcc}
        \toprule
        Model & FID$\downarrow$ & NFE$\downarrow$    \\
        \midrule
        DDPM~\cite{ho2020denoising} & 11.0 &  250     \\
        iDDPM~\cite{nichol2021improved} & 2.92 &  250     \\
        ADM~\cite{dhariwal2021diffusion} & 2.07 & 250      \\
        CFG~\cite{ho2021classifierfree} & 1.55 & 250      \\
        CDM~\cite{ho2022cascaded} & 1.48 & 8000      \\
        RIN~\cite{jabri2022scalable} & \bf{1.23} & 1000      \\
        VDM++~\cite{kingma2023understanding} & 1.43 & 511      \\
        % EDM2-XL~\cite{karras2023analyzing} & 1.33 & 63      \\
        \midrule
        EDM (Heun)~\cite{karras2022elucidating} & 2.18 & 511      \\
        EDM (SDE)~\cite{karras2022elucidating} & 1.38 & 511      \\
        EDM+DG~\cite{kim2023refining} & 1.38 & 511      \\
        EDM+Restart~\cite{xu2023restart} & 1.37 & 623 \\
        \bf{EDM+DiffRS (ours)} & \underline{1.26} & 273.93      \\
        \bottomrule
    \end{tabular}
    }
    \label{tab:imagenet64_sota}
\end{table}

\begin{figure}
    \centering
    \includegraphics[width=\linewidth]{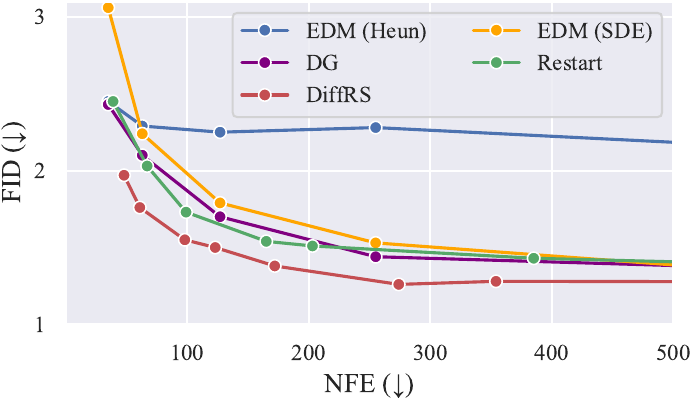}
    %\vspace{-.8em}
    \caption{FID vs. NFE on ImageNet 64$\times$64 with EDM.}
    \label{fig:nfe_fid_imagenet}
    %\vspace{-.5em}
\end{figure}

\begin{table}[t]
    \centering
    \caption{Performance comparison on class-conditional ImageNet 256$\times$256 with DiT-XL/2-G~\cite{peebles2023scalable}. `Time' is the average sampling time to generate 100 samples in minutes.}
    \vskip 0.1in
    \adjustbox{max width=\linewidth}{%
    \begin{tabular}{lcccccccc}
        \toprule
        Sampler & NFE$\downarrow$ & Time$\downarrow$& FID$\downarrow$ & sFID$\downarrow$ & IS$\uparrow$ & Prec$\uparrow$ & Rec$\uparrow$ &F1$\uparrow$    \\
        \midrule
        \multirow{3}{*}{\shortstack[l]{DDPM\\ \cite{ho2020denoising}}}    & 250 & 3.71 & 2.30 & 4.72 & 277.2 & 0.826 & 0.579 & 0.681 \\
                & 300 & 4.38 & 2.33 & 4.69 & 280.8 & 0.830 & 0.582 & 0.684 \\
                & 415 & 5.91 & 2.30 & \bf{4.68} & 279.8 & 0.831 & 0.572 & 0.678 \\
        \midrule
        \multirow{3}{*}{\shortstack[l]{DG\\ \cite{kim2023refining}}}      & 250 & 4.02 & 1.88 & 5.15 & 284.1 & 0.786 & 0.633 & 0.701 \\
                & 300 & 4.76 & 1.98 & 5.35 & \bf{287.9} & 0.793 & 0.621 & 0.696 \\
                & 375 & 5.87 & 1.83 & 4.99 & \bf{287.9} & 0.791 & 0.624 & 0.698 \\
        \midrule
        \bf{DG+DiffRS (ours)} & 306.88 & 5.87 & \bf{1.76} & \bf{4.68} & 279.1 & 0.796 & 0.629 & \bf{0.703}  \\
        \bottomrule
    \end{tabular}
    }
    \label{tab:imagenet256_sota}
\end{table}

\begin{figure}[t]
     \centering
     \begin{subfigure}[b]{0.32\linewidth}
         \centering
         \includegraphics[width=\linewidth]{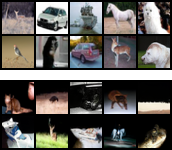}
         \caption{$\sigma=28.4$}
         \label{subfig:image3}
     \end{subfigure}
     \hfill
     \begin{subfigure}[b]{0.32\linewidth}
         \centering
         \includegraphics[width=\linewidth]{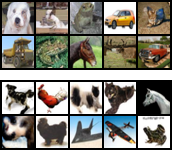}
         \caption{$\sigma=1.92$}
         \label{subfig:image9}
     \end{subfigure}
     \hfill
     \begin{subfigure}[b]{0.32\linewidth}
         \centering
         \includegraphics[width=\linewidth]{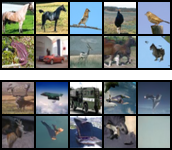}
         \caption{$\sigma=0.002$}
         \label{subfig:image17}
     \end{subfigure}
    %\vspace{-.5em}
        \caption{Generated images with the highest (top) and lowest (bottom) acceptance probability at each timestep, obtained using the EDM (Heun) sampler on CIFAR-10. $\sigma=\{28.4, 1.92, 0.002\}$ corresponds to the $t=\{15, 9, 1\}$, respectively, with $T=18$.}
        % The timesteps are expressed as $\sigma$ of the EDM scheme. 
        \label{fig:image}
\end{figure}

\subsection{Analysis on Benchmark Datasets}
\label{sec:bench}

\textbf{CIFAR-10}~
\cref{tab:cifar10_SOTA_all} presents the performance of previous diffusion models and our proposed method on CIFAR-10. The proposed method achieves new SOTA with FID scores of 1.59 for the unconditional case and 1.52 for the class-conditional case. 

For a detailed analysis, the second block of \cref{tab:cifar10_SOTA_all} compares samplers that improve the sampling process using the same fixed pre-trained diffusion model on CIFAR-10. DiffRS exhibits the best performance under the same diffusion checkpoint. DiffRS is based on Heun's 2\textsuperscript{nd} order sampler (Heun) with 35 NFEs, and the NFE is increased due to rejection. For a fair comparison, we evaluate the baseline samplers with the same NFEs as DiffRS, and we observe that DiffRS still outperforms other baseline samplers under the same NFEs.

\textbf{ImageNet 64$\times$64}~
\cref{tab:imagenet64_sota} shows the performance for class-conditional image generation on ImageNet 64$\times$64. We report the best FID performances over NFE for each method. DiffRS achieves competitive performance on class-conditional ImageNet 64$\times$64, approaching SOTA with an FID score of 1.26 while requiring fewer NFEs compared to the current SOTA model (1.23 with 1000 NFEs).

In \cref{fig:nfe_fid_imagenet}, we evaluate the FID values of various NFEs for each method with the fixed pre-trained diffusion checkpoint on ImageNet 64$\times$64. We compare with the deterministic sampler (Heun) and the stochastic sampler (SDE) proposed by EDM~\cite{karras2022elucidating}, DG~\cite{kim2023refining}, and Restart~\cite{xu2023restart}. DG and DiffRS utilize Heun as the base sampler for small NFE regime and switch to the SDE sampler for large NFE regime. Restart employs Heun as the base sampler because the method is inherently based on the ODE sampler. DiffRS adjusted the backbone sampler and the value of $\gamma$ to measure performance on different NFEs, as detailed in \cref{subsec:app_config_diffrs}. Notably, DiffRS consistently outperforms on all NFE regimes. We include the uncurated generated images in \cref{subsec:app_gen_img}. These results highlight the effective and efficient sampling capabilities of DiffRS from the provided pre-trained network information.

\textbf{ImageNet 256$\times$256}~
We perform the experiment on high-resolution class-conditional image generation using ImageNet 256$\times$256 with DiT-XL/2-G~\cite{peebles2023scalable}. We apply DiffRS to DG sampler, and we also measure the performances of DDPM and DG on comparable NFEs and sampling time. As shown in \cref{tab:imagenet256_sota}, DiffRS performs better than DDPM and DG on the FID metric. Additionally, DiffRS achieves performance on par with the best results for the sFID and F1 metrics, while DDPM and DG have lower performance on one of these metrics. Therefore, DiffRS can be effectively used for sample refinement in high-resolution image generation.

\textbf{Acceptance Probability}~
\cref{fig:image} visualizes the top 10 and bottom 10 images for each timestep, determined by calculating the acceptance probability for 50,000 generated CIFAR-10 images sampled by the EDM (Heun) sampler. We observe that the top images have better visual quality. Conversely, for the bottom images, the images at large timesteps often have an overall unclear appearance, and the images at small timesteps have distortions in finer details. DiffRS effectively eliminates these problematic images, resulting in new high-quality images.

\begin{table}[t]
    %\vspace{-.5em}
    \centering
    \caption{Ablation studies on unconditional CIFAR-10.}
    \vskip 0.1in
    \adjustbox{max width=\linewidth}{%
    \begin{tabular}{r@{\hspace{\tabcolsep}}lcc}
        \toprule
         & Methods & FID$\downarrow$ & NFE$\downarrow$    \\
        \midrule
        & No rejection sampling & 2.01 & 35 \\
        \midrule
        % 1) & Rejection only on $t=0$ & 3.73 & 295.34 \\
        $\mathtt{(a)}$ & No sequential rejection sampling & 3.73 & 295.34 \\
        $\mathtt{(b)}$ & Marginal sequential rejection sampling & 1.66 & 63.57 \\
        \midrule
        $\mathtt{(c)}$ & Re-init. to $t+1$ by one-step forward only & 1.84 & 47.69 \\
        $\mathtt{(d)}$ & Re-init. to $T$ by prior distribution & 1.72 & 138.07 \\
        \midrule
        & DiffRS & \bf{1.59} & 64.06      \\
        \bottomrule
    \end{tabular}
    }
    \label{tab:abl}
    %\vspace{-.5em}
\end{table}

\begin{figure*}[t]
    \centering
    \begin{minipage}{.32\linewidth}
        \centering
        \includegraphics[width=\linewidth]{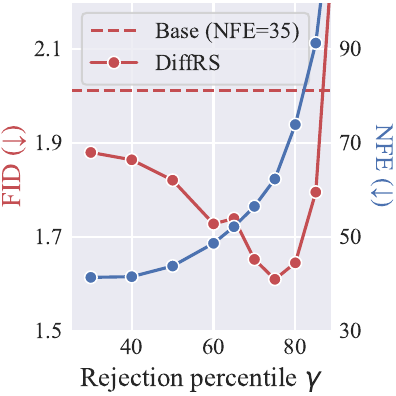}
        \caption{Sensitivity analysis of $\gamma$ on unconditional CIFAR-10.}% Base is EDM (Heun) sampler.}
        \label{fig:gamma}
    \end{minipage}
    \hfill
    \begin{minipage}{.32\linewidth}
        \centering
        \includegraphics[width=\linewidth]{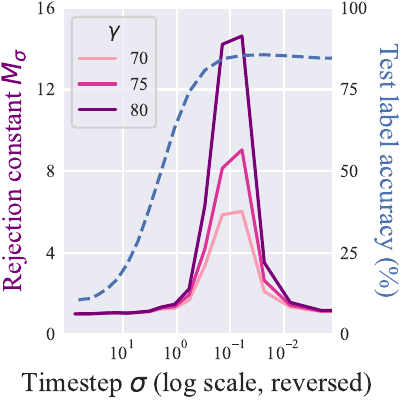}
        \caption{Rejection constant $M_\sigma$ over timesteps on unconditional CIFAR-10.}
        \label{fig:reject_const}
    \end{minipage}
    \hfill
    \begin{minipage}{.32\linewidth}
    \centering
    \includegraphics[width=\linewidth]{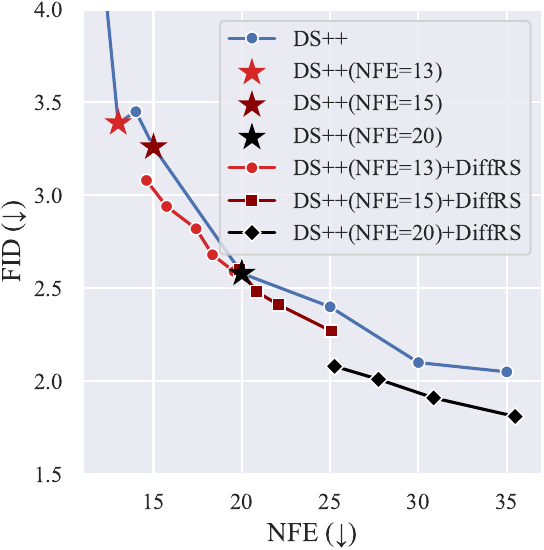}
    \caption{FID vs. NFE on unconditional CIFAR-10 with DPM-Solver++ (DS++).}
    \label{fig:nfe_fid_dpmsolver}
    \end{minipage}
    %\vspace{-1.5em}
\end{figure*}

\subsection{Ablation Studies}
\textbf{Sequential Rejection Sampling}~
% We investigate the effect of the sequential rejection sampling scheme in diffusion models. We apply the marginal rejection sampling at $t=0$ only to the samples obtained from the data space through the sampling process, using the ratio $L_0=\frac{q_0}{p_0^{\boldsymbol{\theta}}}$. As shown in $\mathtt{(a)}$ of \cref{tab:abl}, such rejection sampling results in a worse FID. We attribute this result to the inaccuracy of density ratio estimation in high-dimensional data space~\cite{rhodes2020telescoping}. Additionally, when a rejection occurs, it requires iterative sampling from the prior distribution, resulting in a significantly increased NFE. In contrast, DiffRS performs sequential rejection sampling based on the density ratio $L_t$ at each $t$. Then, the two distributions in the ratio become closer as $t$ increases due to the perturbation from the forward process, leading to relatively accurate ratio estimation~\cite{anonymous2024training}. Moreover, the rejection at intermediate timesteps allows for a relative reduction in NFE.
We investigate the effect on the sequential rejection sampling based on the transition kernel, considering two scenarios: $\mathtt{(a)}$ marginal rejection sampling only at $t=0$ using $L_0$, and $\mathtt{(b)}$ sequential rejection sampling based on the marginal probability using $L_t$. As seen in $\mathtt{(a)}$ of \cref{tab:abl}, performance deteriorates without sequential rejection sampling, attributed to the challenges of density ratio estimation in high-dimensional data space~\cite{rhodes2020telescoping}. Additionally, rejections require iterative sampling from the prior distribution, significantly increasing the NFE.
In contrast, DiffRS performs sequential rejection sampling utilizing the time-dependent density ratios. As $t$ increases, the two distributions in the ratio become closer, leading to relatively accurate ratio estimation~\cite{kim2024training}. Moreover, rejections at intermediate timesteps contribute to a relative reduction in NFE.
On the other hand, in case of $\mathtt{(b)}$, using the marginal probability for sequential rejection sampling leads to performance degradation due to the mismatch between sampling and proposal distributions.

\textbf{Re-initialization}~
The third block in \cref{tab:abl} presents the variants of the re-initialization methods. In the case of rejection at timestep $t$, the first method, denoted $\mathtt{(c)}$, performs only one-step forward to $t+1$ and continues DiffRS from $t+1$; and the second method, denoted $\mathtt{(d)}$, transitions to timestep $T$ and restarts DiffRS from the prior distribution. The results show that both variants outperform the backbone sampler, but fall short of the performance of the proposed re-initialization method. In the case of $\mathtt{(c)}$, the re-initialized samples could deviate from the true distribution $q_{t+1}$, leading to a drop in performance. In the case of $\mathtt{(d)}$, there is a significant increase in NFE because sampling is restarted from timestep $T$. In contrast, our re-initialization method performs additional rejection sampling on the samples obtained through the forward step, attempting to initialize similar to the true distribution. Furthermore, by conducting the adequate number of forward steps for each sample, our method achieves superior performance at suitable NFEs.

\textbf{Rejection Constant}~
% \cref{fig:reject_const} visualizes the selected rejection constant $M_\sigma$ over timesteps on unconditional CIFAR-10. We choose $\gamma=75$\textsuperscript{th} percentile of the ratios $\{ \frac{\hat{L}_t^{\boldsymbol{\phi}}(\rvx_t)}{\hat{L}_{t+1}^{\boldsymbol{\phi}}(\rvx_t)} \}$. The rejection constant is inversely proportional to the acceptance probability~\cite{ripley2009stochastic}, so a higher rejection constant implies a higher proportion of rejected samples. The distribution of the rejection constant over timesteps is bell-shaped, with a peak around $\sigma=0.1$. This implies that refinement is more likely to occur at the intermediate timesteps rather than at timesteps closer to noise or data. Interestingly, Restart~\cite{xu2023restart} also adds noise around this timestep, which was heuristically chosen. These findings indicate that the estimation error is significant around $\sigma=0.1$.
\cref{fig:gamma} shows the effect of the hyperparameter $\gamma$ on FID and NFE on CIFAR-10, where the rejection percentile $\gamma$ determines the rejection constants $M_t $ in the experiment. We observe that the NFE increases exponentially with increasing $\gamma$. While DiffRS generally has a better FID than the base sampler, there is an increase beyond an extreme threshold of $\gamma$. We empirically observe that the FID tends to increase when the NFE exceeds 2-3 times that of the base sampler. Therefore, we set $\gamma$ to keep the NFE at this level, typically in the range of $[75, 85]$.

\cref{fig:reject_const} visualizes the rejection constant $M_\sigma$ over timesteps on unconditional CIFAR-10 under various rejection percentile $\gamma$. As the rejection constant is inversely proportional to the acceptance probability~\cite{ripley2009stochastic}, a higher rejection constant implies a higher proportion of rejected samples. The distribution of the rejection constant over timesteps is bell-shaped, with a peak around $\sigma=0.1$.
% This implies that refinement is more likely to occur at the intermediate timesteps rather than at timesteps closer to noise or data.
Interestingly, Restart~\cite{xu2023restart} also adds noise around this timestep, which was chosen heuristically. 

To further analyze this interval, we include the test label accuracy of a time-dependent classifier trained by CIFAR-10 (blue dotted line). This result indicates the level of semantic information in the images at each timestep. We observe that the sample quality becomes distinguishable once a certain level of semantic information is reached. Also, in regions very close to the data space, the rejection rate decreases as the sample quality is almost determined.

% \paragraph{Sample Analysis}

\subsection{Application to Fast Sampler}

Diffusion models inherently suffer from problems of sampling speed due to the need for iterative sampling. To address this, various methods for fast sampling, such as the use of efficient ODE and SDE solvers, have been proposed~\cite{jolicoeur2021gotta,lu2022dpm,Dockhorn2022genie,zhang2023fast}. Most of these methods aim to follow the perturbed data distribution $q_t(\rvx_t)$ at time $t$, making it possible to apply DiffRS to these fast samplers.

We verify this experimentally on unconditional CIFAR-10 with DPM-Solver++, one of the few-step accelerated sampling methods~\cite{lu2022dpm,lu2022dpm++}. As shown in \cref{fig:nfe_fid_dpmsolver}, when comparing stars and line segments of the same color, we observe that although additional NFEs are incurred, the performance is improved compared to the base sampler. Additionally, we find that the performance is improved compared to the same NFEs of DPM-Solver++.

\subsection{Application to Distillation Methods}

Diffusion distillation methods are an alternative approach to accelerating the sampling process. They aim to obtain a distilled generative model with fewer NFEs from the information of the existing diffusion model process~\cite{salimans2022progressive,song2023consistency,meng2023distillation}. As discussed in \cref{subsec:overall}, DiffRS can be applied to diffusion distillation methods where an intermediate sample $\rvx_t$ is required to follow a perturbed data distribution $q_t(\rvx_t)$.

To investigate the effectiveness of DiffRS in distillation methods, we apply it to the Consistency Distillation (CD)~\cite{song2023consistency}. We use CD with 2 and 7 NFEs as base samplers. %, denoted `CD(2)' and `CD(7)' respectively.
For DiffRS, we adjust the hyperparameter $\gamma$ to observe the changes in FID over NFE. \cref{fig:nfe_fid_imagenet_cd} shows that the combination of CD and DiffRS can generate images with an FID of less than 3.0 at an NFE nearly 10. This result suggests that DiffRS can also be effectively applied to diffusion distillation models.

\begin{figure}
    \centering
    \begin{minipage}{.48\linewidth}
            \centering
            \includegraphics[width=\linewidth]{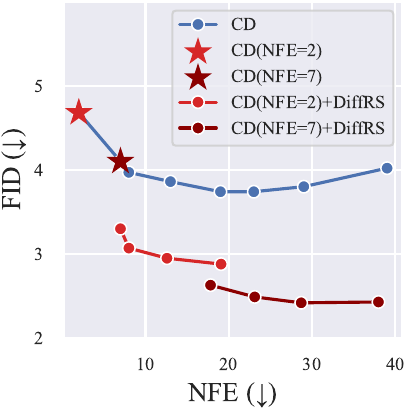}
            \caption{FID vs. NFE on ImageNet 64$\times$64 with CD.}
            \label{fig:nfe_fid_imagenet_cd}
    \end{minipage}%
    \hfill
    \begin{minipage}{.48\linewidth}
            \centering
            \includegraphics[width=\linewidth]{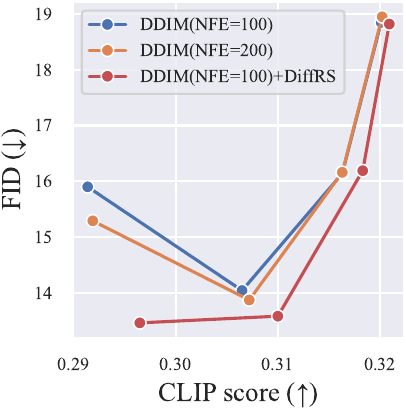}
            \caption{FID vs. CLIP score with Stable Diffusion v1.5.}
            \label{fig:fid_clip_sd}
    \end{minipage}
    %\vspace{-.5em}
\end{figure}

\subsection{Application to Large-scale Text-conditional Model}

We further show that DiffRS can be applied to large-scale text-conditional diffusion models such as Stable Diffusion~\cite{rombach2022high}. We use the publicly available Stable Diffusion v1.5 pre-trained on LAION-5B~\cite{schuhmann2022laion} with a resolution of 512$\times$512. We apply DiffRS to DDIM~\cite{song2021denoising} with 100 NFEs. Following the evaluation protocol of previous studies~\cite{nichol2022glide,xu2023restart}, we generate 5,000 images from captions randomly sampled from the COCO~\cite{lin2014microsoft} validation set using the classifier-free guidance method~\cite{ho2021classifierfree}. We evaluate the sample quality using the FID metric and measure the image-text alignment through the CLIP score~\cite{hessel2021clipscore}.

\cref{fig:fid_clip_sd} plots the trade-off between FID and CLIP scores, varying the classifier-free guidance weights. DiffRS exhibits a superior FID for the same CLIP score, with an average of 166 NFEs. In contrast, the performance of DDIM did not significantly improve even with an increased number of NFEs. \cref{fig:example_sd} visualizes the example of images generated by DDIM and ours.
% These results demonstrate that our model can effectively enhance the sampling performance of a well-trained diffusion model in large-scale text-to-image generation models.
These results demonstrate the scalability of our model to effectively improve the sampling performance of a well-trained diffusion model even in text-to-image generation scenarios.

\begin{figure}[t]
    \centering
    \includegraphics[width=0.98\linewidth]{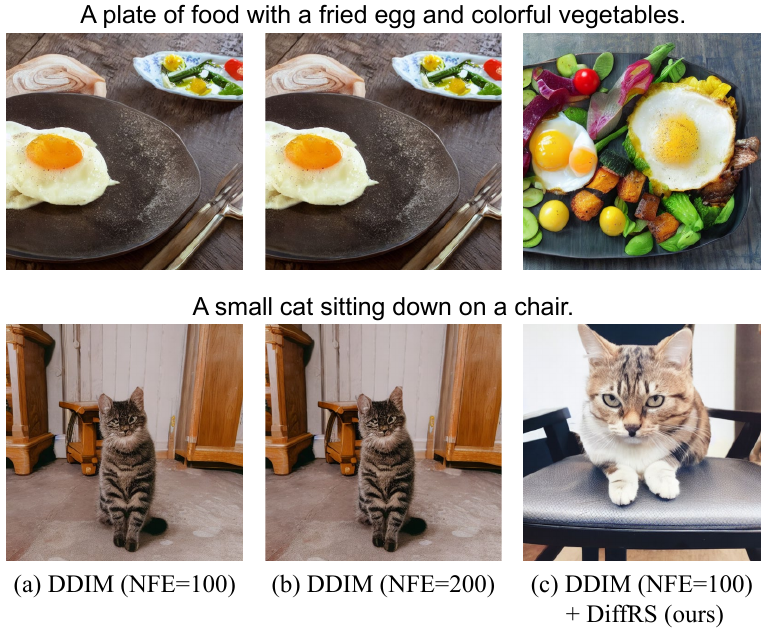}
    %\vspace{-1em}
    \caption{Example of generated images with Stable Diffusion v1.5. We use a classifier-free guidance weight of 2, and images on the same row are generated from the same noise from the prior distribution and the text prompt located above.}
    \label{fig:example_sd}
    %\vspace{-1em}
\end{figure}

\section{Conclusion}
\label{sec:conc}

% In contrast to these approaches, we aim to effectively and efficiently leverage a well-trained diffusion model to improve the sampling quality.

We present Diffusion Rejection Sampling (DiffRS), a new diffusion sampling approach that ensures alignment between the reverse transition and the true transition at each timestep. The acceptance probability is estimated by training a time-dependent discriminator. We also propose the re-initialization method for DiffRS to effectively and efficiently refine the rejected samples. Theoretical analysis shows that discriminator training tightens the upper bound on the divergence between the data distribution and the refined distribution by DiffRS. Empirically, DiffRS achieves the state-of-the-art performances on the benchmark datasets, and DiffRS demonstrates its effectiveness on few-step accelerated samplers, diffusion distillation models, and large-scale text-to-image generation models.

Potential future work includes applying advanced sampling methods, such as Metropolis-Hastings sampling~\cite{turner2019metropolis}, to diffusion models. Additionally, developing methods to deal with discrepancies between the data distribution learned by a pre-trained diffusion model and the target data distribution, such as focusing on minority samples or the presence of label noise~\cite{um2024dont,na2024labelnoise}, will be promising applications of DiffRS.

% \section*{Accessibility}
% Authors are kindly asked to make their submissions as accessible as possible for everyone including people with disabilities and sensory or neurological differences.
% Tips of how to achieve this and what to pay attention to will be provided on the conference website \url{http://icml.cc/}.

% \section*{Software and Data}

% If a paper is accepted, we strongly encourage the publication of software and data with the
% camera-ready version of the paper whenever appropriate. This can be
% done by including a URL in the camera-ready copy. However, \textbf{do not}
% include URLs that reveal your institution or identity in your
% submission for review. Instead, provide an anonymous URL or upload
% the material as ``Supplementary Material'' into the OpenReview reviewing
% system. Note that reviewers are not required to look at this material
% when writing their review.

% Acknowledgements should only appear in the accepted version.
\section*{Acknowledgements}

This research was supported by AI Technology Development for Commonsense Extraction, Reasoning, and Inference from Heterogeneous Data (IITP) funded by the Ministry of Science and ICT (2022-0-00077).

\section*{Impact Statement}
This paper primarily focuses on improving sample quality and efficiency in the diffusion generation process. The application of our method is promising in various fields such as art, design, and entertainment. However, ethical considerations, including the responsible use of AI-generated content and the prevention of harmful information creation, require careful attention. Implementing features such as a safety checker module and invisible watermarking can address some of these concerns.

% In the unusual situation where you want a paper to appear in the
% references without citing it in the main text, use \nocite
% \nocite{langley00}

\bibliography{main}
\bibliographystyle{icml2024}

%%%%%%%%%%%%%%%%%%%%%%%%%%%%%%%%%%%%%%%%%%%%%%%%%%%%%%%%%%%%%%%%%%%%%%%%%%%%%%%
%%%%%%%%%%%%%%%%%%%%%%%%%%%%%%%%%%%%%%%%%%%%%%%%%%%%%%%%%%%%%%%%%%%%%%%%%%%%%%%
% APPENDIX
%%%%%%%%%%%%%%%%%%%%%%%%%%%%%%%%%%%%%%%%%%%%%%%%%%%%%%%%%%%%%%%%%%%%%%%%%%%%%%%
%%%%%%%%%%%%%%%%%%%%%%%%%%%%%%%%%%%%%%%%%%%%%%%%%%%%%%%%%%%%%%%%%%%%%%%%%%%%%%%
\newpage
\appendix
\onecolumn
\section{Proof of Theoretical Analysis}
\label{sec:app_proof}

In this section we provide a proof of \cref{thm:main}.

\thma*

\begin{proof}
    First, we provide the derivation of \cref{eq:elbo}, the upper bound of KL divergence between the data distribution and the model distribution, comes from \cite{ho2020denoising}.
    \begin{align}
    \label{eq:elbo_proof}
        D_{\text{KL}} (q_0|| p_0^{\boldsymbol{\theta}}) & = \mathbb{E}_{q_0} [ - \log  p_0^{\boldsymbol{\theta}} (\rvx_0)] - H(q_0) \\
        & \leq \mathbb{E}_{q_{0:T}} \Big [ - \log \frac{p_{0:T}^{\boldsymbol{\theta}}(\rvx_{0:T})}{q_{1:T|0}(\rvx_{1:T}|\rvx_0)} \Big ] - H(q_0) \\
        & = \mathbb{E}_{q_{0:T}} \Big [ - \log p_T^{\boldsymbol{\theta}} (\rvx_T) - \sum_{t=0}^{T-1} \log \frac{p_{t|t+1}^{\boldsymbol{\theta}}(\rvx_{t}|\rvx_{t+1})}{q_{t+1|t}(\rvx_{t+1}|\rvx_t)} \Big ] - H(q_0) \\
        & = \mathbb{E}_{q_{0:T}} \Big [ - \log p_T^{\boldsymbol{\theta}} (\rvx_T) - \sum_{t=0}^{T-1} \log \frac{p_{t|t+1}^{\boldsymbol{\theta}}(\rvx_{t}|\rvx_{t+1})}{q_{t|t+1}(\rvx_{t}|\rvx_{t+1})} \frac{q_t(\rvx_t)}{q_{t+1}(\rvx_{t+1})} \Big ] - H(q_0) \\
        & = \mathbb{E}_{q_{0:T}} \Big [ - \log \frac{p_T^{\boldsymbol{\theta}} (\rvx_T)}{q_T(\rvx_T)} - \sum_{t=0}^{T-1} \log \frac{p_{t|t+1}^{\boldsymbol{\theta}}(\rvx_{t}|\rvx_{t+1})}{q_{t|t+1}(\rvx_{t}|\rvx_{t+1})} - \log q_0(\rvx_0) \Big ] - H(q_0) \\
        & = D_{\text{KL}}(q_T|| p_T^{\boldsymbol{\theta}}) + \sum_{t=0}^{T-1} \mathbb{E}_{q_{t+1}} \Big [ D_{\text{KL}}(q_{t|t+1} || p_{t|t+1}^{\boldsymbol{\theta}}) \Big ] =: J(\boldsymbol{\theta}).
    \end{align}

    If we substitute $p_0^{\boldsymbol{\theta}}$ with $p_0^{\boldsymbol{\theta}, \boldsymbol{\phi}}$ in the above, the following equation for the refined distribution $p_0^{\boldsymbol{\theta}, \boldsymbol{\phi}}$ by DiffRS holds:
    \begin{align}
    \label{eq:refine_elbo_proof1}
        D_{\text{KL}} (q_0|| p_0^{\boldsymbol{\theta}, \boldsymbol{\phi}}) \leq D_{\text{KL}}(q_T|| p_T^{\boldsymbol{\theta},\boldsymbol{\phi}}) + \sum_{t=0}^{T-1} \mathbb{E}_{q_{t+1}} \Big [ D_{\text{KL}}(q_{t|t+1} || p_{t|t+1}^{\boldsymbol{\theta},\boldsymbol{\phi}}) \Big ].
    \end{align}

    By the relationship between $p^{\boldsymbol{\theta}}$ and $p^{\boldsymbol{\theta}, \boldsymbol{\phi}}$, as described in \cref{eq:refine1,eq:refine2}, each term in the upper bound is further derived as follows:
    \begin{align}
    \label{eq:refine_elbo_proof2}
        D_{\text{KL}}(q_T|| p_T^{\boldsymbol{\theta},\boldsymbol{\phi}}) & = \mathbb{E}_{q_T} [ - \log  {p_T^{\boldsymbol{\theta},\boldsymbol{\phi}} (\rvx_T)} + \log {q_T(\rvx_T)}] \\
        & = \mathbb{E}_{q_T} [ - \log  {p_T^{\boldsymbol{\theta}} (\rvx_T)} - \log {\bar{A}_T^{\boldsymbol{\phi}}(\rvx_T)} + \log {q_T(\rvx_T)}] \label{eq:refine_elbo_proof2_1} \\
        & = D_{\text{KL}}(q_T|| p_T^{\boldsymbol{\theta}}) + \mathbb{E}_{q_T} [ - \log {\bar{A}_T^{\boldsymbol{\phi}}(\rvx_T)} ] ,
    \end{align}
    \begin{align}
    \label{eq:refine_elbo_proof3}
        D_{\text{KL}}(q_{t|t+1} || p_{t|t+1}^{\boldsymbol{\theta},\boldsymbol{\phi}}) & = \mathbb{E}_{q_{t|t+1}} [ -\log p_{t|t+1}^{\boldsymbol{\theta},\boldsymbol{\phi}} (\rvx_t | \rvx_{t+1}) + \log q_{t|t+1} (\rvx_t | \rvx_{t+1}) ] \\
        & = \mathbb{E}_{q_{t|t+1}} [ -\log p_{t|t+1}^{\boldsymbol{\theta}}(\rvx_t|\rvx_{t+1}) - \log \bar{A}_t^{\boldsymbol{\phi}}(\rvx_t,\rvx_{t+1}) + \log q_{t|t+1} (\rvx_t | \rvx_{t+1}) ] \label{eq:refine_elbo_proof3_1}\\
        & = D_{\text{KL}}(q_{t|t+1} || p_{t|t+1}^{\boldsymbol{\theta}}) + \mathbb{E}_{q_{t|t+1}} [ - \log \bar{A}_t^{\boldsymbol{\phi}}(\rvx_t,\rvx_{t+1}) ]
    \end{align}

    Therefore, we can derive the upper bound of the KL divergence as follows:
    \begin{align}
    \label{eq:refine_elbo_proof4}
        D_{\text{KL}} (q_0|| p_0^{\boldsymbol{\theta}, \boldsymbol{\phi}}) & \leq D_{\text{KL}}(q_T|| p_T^{\boldsymbol{\theta}}) + \mathbb{E}_{q_T} [ - \log {\bar{A}_T^{\boldsymbol{\phi}}(\rvx_T)} ]  + \sum_{t=0}^{T-1} \mathbb{E}_{q_{t+1}} \Big [ D_{\text{KL}}(q_{t|t+1} || p_{t|t+1}^{\boldsymbol{\theta}}) + \mathbb{E}_{q_{t|t+1}} [ - \log \bar{A}_t^{\boldsymbol{\phi}}(\rvx_t,\rvx_{t+1}) ]\Big ] \\
        & = J(\boldsymbol{\theta}) + \mathbb{E}_{q_T} [ - \log {\bar{A}_T^{\boldsymbol{\phi}}(\rvx_T)} ] + \sum_{t=0}^{T-1} \mathbb{E}_{q_{t,t+1}}  [ - \log \bar{A}_t^{\boldsymbol{\phi}}(\rvx_t,\rvx_{t+1}) ] \\
        & = J(\boldsymbol{\theta}) + R(\boldsymbol{\phi}) =: J(\boldsymbol{\theta},\boldsymbol{\phi}). 
    \end{align}

    where $R(\boldsymbol{\phi}) := \mathbb{E}_{q_{T}} [ - \log \bar{A}_T^{\boldsymbol{\phi}} ] + \sum_{t=0}^{T-1} \mathbb{E}_{q_{t,t+1}} [ - \log \bar{A}_{t}^{\boldsymbol{\phi}} ]$.

    Moreover, the optimal discriminator $\phi^*$ satisfies that:
    \begin{align}
    \label{eq:refine_optimal}
        \bar{A}_T^{\boldsymbol{\phi^*}}(\rvx_T) = \frac{q_{T}(\rvx_T)}{p_{T}^{\boldsymbol{\theta}} (\rvx_T)}, \text{ and }
        \bar{A}_t^{\boldsymbol{\phi^*}}(\rvx_t,\rvx_{t+1}) = \frac{q_{t|t+1}(\rvx_t|\rvx_{t+1})}{p_{t|t+1}^{\boldsymbol{\theta}}(\rvx_{t}|\rvx_{t+1})}.
    \end{align}
    Substituting $\bar{A}_T^{\boldsymbol{\phi^*}}(\rvx_T)$ into \cref{eq:refine_elbo_proof2_1} and $\bar{A}_t^{\boldsymbol{\phi^*}}(\rvx_t,\rvx_{t+1})$ into \cref{eq:refine_elbo_proof3_1} respectively, we observe that each KL term becomes zero. Consequently, the upper bound $J(\boldsymbol{\theta},\boldsymbol{\phi})=0$, which leads to the KL divergence on the data space, $D_{\text{KL}} (q_0|| p_0^{\boldsymbol{\theta}, \boldsymbol{\phi}})$, to be zero.
\end{proof}

\section{Related Works}
\label{sec:app_rel}

\subsection{Reducing Sampling Error of Diffusion Models}
\label{subsec:app_diff}

The sampling error can be measured by the distribution discrepancy between the data distribution and the generated distribution. This error is decomposed into three factors: the network approximation error, the prior mismatch error, and the temporal-discretization error~\cite{kim2022maximum}. To reduce the temporal-discretization error, reducing the sampling interval, which increases the iterative sampling count, is a common strategy; but it comes at the cost of a higher number of network evaluations, which slows down the sampling speed.

A significant amount of research has focused on improving the expressiveness of diffusion models through advances in network architecture or objective structure. For example, some studies proposed loss weights for timesteps or regularization methods for the diffusion objectives~\cite{kim2022soft,kingma2023understanding,lai2023fp}. Additionally, alternative approaches involve the investigation of the effective latent space~\cite{vahdat2021score,rombach2022high,kim2022maximum}. Other efforts aim at learning an implicit prior distribution to minimize the prior mismatch error and reduce the sampling length~\cite{zheng2023truncated}. However, these methods require time-consuming training of the diffusion model.

\subsection{Rejection Sampling}
\label{subsec:app_rel_rej}

Several researches utilize rejection sampling to discard poor samples for better generation quality in generative models. \citet{grover2018variational} propose the rejection sampling on the approximated variational posterior of variational autoencoder. \citet{azadi2018discriminator} introduce the rejection sampling by utilizing the discriminator of the generative adversarial network (GAN) to adjust the implicit distribution of the GAN generator. Similarly, \citet{turner2019metropolis} combine the Metropolis-Hastings algorithm and GAN.
However, there is no previous attempt to improve the sampling quality of the diffusion model via rejection sampling. It should be noted that it is difficult to naively apply the rejection sampling to the diffusion model due to the nature of its iterative sampling process.

\section{Additional Experimental Settings}
\label{sec:app_exp_set}

\subsection{Configurations of Baseline Samplers}
\label{subsec:app_config_base}

We use the baseline samplers as follows: Heun's 2\textsuperscript{nd} ODE sampler (Heun)~\cite{karras2022elucidating}, Improved SDE sampler (SDE)~\cite{karras2022elucidating}, DG~\cite{kim2023refining}, and Restart~\cite{xu2023restart} for the standard benchmark datasets; and DDIM~\cite{song2021denoising} for the text-to-image generation task. 
We adopt the sampling hyperparameter settings  from the experiments of the original papers. In cases where the experiment was not performed in the original paper, we used settings as similar as possible.
In the CIFAR-10, FFHQ, and AFHQv2 experiments, we use the Heun sampler serves as the backbone sampler for DG and Restart.
In the ImageNet 64$\times$64 experiments, we use the better sampler between Heun and SDE at each NFE as the backbone sampler for DG, while we use Heun for Restart.
In the ImageNet 256$\times$256 experiments, we use the DDPM sampler as the backbone sampler for DG.
For DPM-Solver++~\cite{lu2022dpm++}, we apply the singlestep DPM-Solver++.
For the diffusion distillation method, we apply the multi-step consistency sampling for the consistency distillation model~\cite{song2023consistency}.

\subsection{Settings of Discriminator Training}
\label{subsec:app_disc_set}
We follow DG~\citep{kim2023refining} to train a time-dependent discriminator by utilizing the code and some checkpoints from the DG repositories.\footnote{\label{footnote:DG}\url{https://github.com/alsdudrla10/DG}}\footnote{\label{footnote:DG_imagenet}\url{https://github.com/alsdudrla10/DG_imagenet}} We use the provided checkpoints for CIFAR-10 and FFHQ generation, and we train our own discriminator for other datasets. 
Our discriminator is trained on a single NVIDIA GeForce RTX 4090 GPU using CUDA 11.8 and PyTorch 1.12 versions.
The discriminator structure consists of two stacked U-net encoders. The pre-trained U-net encoder is from ADM~\citep{dhariwal2021diffusion} utilized as a feature extractor.\footnote{\label{footnote:ADM}\url{https://github.com/openai/guided-diffusion}} We utilize a randomly initialized feature extractor for the COCO dataset and pre-trained extractor with ImageNet classification for the remaining dataset. The shallow U-net encoders are only the trainable parameters for discriminating, which maps from feature to logits. For the conditional diffusion backbones, the shallow U-net encoders are also designed as a conditional model. The specific configurations are described in \cref{tab:config_disc}.

\begin{table}[h!]
	\caption{Configurations of the discriminator.}
    \vskip 0.1in
	\label{tab:config_disc}
	\centering
    \adjustbox{max width=\linewidth}{%
	\begin{tabular}{lccccccccc}
		\toprule
		& \multicolumn{2}{c}{CIFAR-10 } & \multicolumn{2}{c}{ImageNet64 }&ImageNet256& FFHQ & AFHQv2& COCO \\
            \cmidrule(lr){2-3}\cmidrule(lr){4-5}\cmidrule(lr){6-6}\cmidrule(lr){7-7}\cmidrule(lr){8-8}\cmidrule(lr){9-9}
            \multicolumn{1}{l}{\textbf{Diffusion Backbone}}\\ 
            Model & EDM & EDM & EDM & CD & DiT-XL/2 & EDM&EDM& Stable Diffusion \\ 
            Conditional model & \ding{55} & \ding{52} & \ding{52}& \ding{52} & \ding{52}&\ding{55}&\ding{55}& \ding{52}\\ \midrule
		
            \multicolumn{1}{l}{\textbf{Feature Extractor}}\\
		Model & ADM & ADM & ADM & ADM&ADM&ADM&ADM&ADM \\ 
            Architecture & U-Net encoder & U-Net encoder & U-Net encoder & U-Net encoder &U-Net encoder&U-Net encoder&U-Net encoder&U-Net encoder\\ 
            Pre-trained & \ding{52} & \ding{52} & \ding{52} & \ding{52} & \ding{52}&\ding{52}&\ding{52}&\ding{55} \\ 
		Depth & 4 & 4 & 4 & 4& 4&4 &4&4\\
            Width & 128 &128 & 128 & 128&128&128&128&128 \\
		Attention Resolutions & 32,16,8 & 32,16,8 & 32,16,8 & 32,16,8&32,16,8&32,16,8&32,16,8 &32,16,8 \\
            Input shape (data)& (B,32,32,3) & (B,32,32,3) & (B,64,64,3) & (B,64,64,3)&(B,32,32,4)&(B,64,64,3)&(B,64,64,3)&(B,64,64,4) \\
            Output shape (feature)& (B,8,8,512) & (B,8,8,512) & (B,8,8,512) & (B,8,8,512) & (B,8,8,384)&(B,8,8,512)&(B,8,8,512)&(B,8,8,512) \\ \midrule

            \multicolumn{1}{l}{\textbf{Discriminator}}\\
		Model & ADM & ADM & ADM & ADM&ADM&ADM&ADM&ADM \\ 
            Architecture & U-Net encoder & U-Net encoder & U-Net encoder & U-Net encoder & U-Net encoder &U-Net encoder&U-Net encoder&U-Net encoder\\ 
            Pre-trained & \ding{52} & \ding{52} & \ding{55} & \ding{55} & \ding{55} &\ding{52}&\ding{55}&\ding{55} \\ 
		Depth & 2 & 2 & 2 & 2&2&4 &2&2\\
            Width & 128 &128 & 128 & 128&128&128&128&128 \\
		Attention Resolutions & 32,16,8 & 32,16,8 & 32,16,8 & 32,16,8 & 32,16,8&32,16,8&32,16,8&32,16,8 \\
            Input shape (feature)& (B,8,8,512) & (B,8,8,512) & (B,8,8,512) & (B,8,8,512)&(B,8,8,384)&(B,8,8,512)&(B,8,8,512)&(B,8,8,512) \\
            Output shape (logit)& (B,1) & (B,1) & (B,1) & (B,1) & (B,1)&(B,1)&(B,1)&(B,1) \\ \midrule
            
            \multicolumn{1}{l}{\textbf{Discriminator Training}}&&&&&&&\\
            Time scheduling & VP & VP & Cosine VP& Cosine VP&VP&Cosine VP& Cosine VP& Cosine VP \\
            Time sampling & Importance & Importance & Importance & Importance & Importance&Importance&Importance&Importance \\
            Time weighting & $\frac{g^2}{\sigma^2}$ & $\frac{g^2}{\sigma^2}$ & $\frac{g^2}{\sigma^2}$ & $\frac{g^2}{\sigma^2}$ & $\frac{g^2}{\sigma^2}$&$\frac{g^2}{\sigma^2}$&$\frac{g^2}{\sigma^2}$&$\frac{g^2}{\sigma^2}$ \\ 
            Batch size & 128 & 128 & 128 & 128 & 512 & 128 & 128 & 128 \\
            \# data samples &50,000 & 50,000 & 50,000 & 50,000 & 50,000&60,000&15,803&5,000 \\
            \# generated samples & 25,000 & 50,000 & 50,000& 50,000 & 50,000&60,000&15,803&5,000 \\
            \# Epoch & 60 & 250 & 20 & 50&20&250&20&10 \\ 
		\bottomrule
	\end{tabular}
    }
\end{table}

\begin{table}[ht!]
    \centering
    \caption{Configuration details for each experimental result.}
    \vskip 0.1in
    \adjustbox{max width=\linewidth}{%
    \begin{tabular}{cccccccc}
        \toprule
        \multicolumn{3}{c}{Pre-trained diffusion}  & \multicolumn{2}{c}{Performance} & \multicolumn{3}{c}{Configuration} \\
        \cmidrule(lr){1-3} \cmidrule(lr){4-5} \cmidrule(lr){6-8}
        Dataset & Task & Model & FID$\downarrow$ & NFE$\downarrow$ & Base sampler & Rejection percentile $\gamma$ & Max. iteration $K$   \\
        \midrule
        CIFAR-10 & Unconditional & DDPM++ cont. & 1.91 & 151.86 & EDM (SDE) (NFE=63) & 85 & $\infty$ \\ 
        \cmidrule(lr){3-8}
                        && EDM & 1.59 & 64.06 & EDM (Heun) (NFE=35) & 75 & 105 \\
        \\[-0.66em]
                           &&& 1.88 & 41.37 & EDM (Heun) (NFE=35) & 30 & $\infty$ \\
                           &&& 1.86 & 41.52 & EDM (Heun) (NFE=35) & 40 & $\infty$ \\
                           &&& 1.82 & 43.78 & EDM (Heun) (NFE=35) & 50 & $\infty$ \\
                           &&& 1.73 & 48.61 & EDM (Heun) (NFE=35) & 60 & $\infty$ \\
                           &&& 1.74 & 52.13 & EDM (Heun) (NFE=35) & 65 & $\infty$ \\
                           &&& 1.65 & 56.45 & EDM (Heun) (NFE=35) & 70 & $\infty$ \\
                           &&& 1.60 & 62.28 & EDM (Heun) (NFE=35) & 75 & $\infty$ \\
                           &&& 1.64 & 73.84 & EDM (Heun) (NFE=35) & 80 & $\infty$ \\
                           &&& 1.79 & 91.17 & EDM (Heun) (NFE=35) & 85 & $\infty$ \\
        \cmidrule(lr){3-8}
                        && EDM & 3.08 & 14.60 & DPM-Solver++ (NFE=13) & 20 & $\infty$ \\
                           &&& 2.94 & 15.74 & DPM-Solver++ (NFE=13) & 30 & $\infty$ \\
                           &&& 2.82 & 17.41 & DPM-Solver++ (NFE=13) & 40 & $\infty$ \\
                           &&& 2.68 & 18.34 & DPM-Solver++ (NFE=13) & 50 & $\infty$ \\
                           &&& 2.59 & 19.56 & DPM-Solver++ (NFE=13) & 60 & $\infty$ \\
                           &&& 2.60 & 19.88 & DPM-Solver++ (NFE=15) & 40 & $\infty$ \\
                           &&& 2.48 & 20.85 & DPM-Solver++ (NFE=15) & 45 & $\infty$ \\
                           &&& 2.41 & 22.10 & DPM-Solver++ (NFE=15) & 50 & $\infty$ \\
                           &&& 2.27 & 25.06 & DPM-Solver++ (NFE=15) & 60 & $\infty$ \\
                           &&& 2.08 & 25.25 & DPM-Solver++ (NFE=20) & 40 & $\infty$ \\
                           &&& 2.01 & 27.74 & DPM-Solver++ (NFE=20) & 50 & $\infty$ \\
                           &&& 1.91 & 30.86 & DPM-Solver++ (NFE=20) & 60 & $\infty$ \\
                           &&& 1.81 & 35.48 & DPM-Solver++ (NFE=20) & 70 & $\infty$ \\
        
        \midrule
        CIFAR-10 & Class-conditional & EDM & 1.52 & 88.22 & EDM (Heun) (NFE=35) & 80 & 105 \\
        \midrule
        FFHQ & Unconditional & EDM & 1.60 & 198.65 & EDM (Heun) (NFE=71)  & 90 & 213\\
        \midrule
        AFHQv2 & Unconditional & EDM & 1.80 & 144.92 & EDM (Heun) (NFE=71)  & 85 & 213\\
        \midrule
        ImageNet 64$\times$64 & Class-conditional & EDM & 1.97 & 48.23 & EDM (Heun) (NFE=27) & 60 & $\infty$ \\
                              &&& 1.76 & 60.95 & EDM (Heun) (NFE=27) & 70 & $\infty$ \\
                              &&& 1.55 & 98.60 & EDM (Heun) (NFE=27) & 80 & $\infty$ \\
                              &&& 1.50 & 123.00 & EDM (SDE) (NFE=63) & 60 & $\infty$ \\
                              &&& 1.38 & 171.95 & EDM (SDE) (NFE=63) & 70 & $\infty$ \\
                              &&& 1.26 & 273.93 & EDM (SDE) (NFE=127) & 70 & $\infty$ \\
                              &&& 1.27 & 353.60 & EDM (SDE) (NFE=127) & 75 & $\infty$ \\
                              &&& 1.27 & 1169.67 & EDM (SDE) (NFE=511) & 70 & $\infty$ \\
        \cmidrule(lr){3-8}
        && CD & 3.30 & 7.00 & CD (NFE=2) & 70 & $\infty$ \\
                              &&& 3.07 & 8.01 & CD (NFE=2) & 75 & $\infty$ \\
                              &&& 2.95 & 12.58 & CD (NFE=2) & 85 & $\infty$ \\
                              &&& 2.88 & 19.07 & CD (NFE=2) & 90 & $\infty$ \\
                              &&& 2.63 & 17.81 & CD (NFE=7) & 60 & $\infty$ \\
                              &&& 2.49 & 23.11 & CD (NFE=7) & 80 & $\infty$ \\
                              &&& 2.42 & 28.69 & CD (NFE=7) & 85 & $\infty$ \\
                              &&& 2.43 & 37.96 & CD (NFE=7) & 90 & $\infty$ \\
        \midrule
        ImageNet 256$\times$256 & Class-conditional & DiT-XL/2 & 1.76 & 306.88 & DDPM+DG (NFE=250) & 65 & $\infty$ \\
        \midrule
        COCO & Text-to-image & Stable Diffusion (weight=2) & 13.46 & 166.95 & DDIM (NFE=100) & 80 & $\infty$ \\
        && Stable Diffusion (weight=3) & 13.58 & 166.36 & DDIM (NFE=100) & 80 & $\infty$ \\
        && Stable Diffusion (weight=5) & 16.19 & 217.13 & DDIM (NFE=100) & 80 & $\infty$ \\
        && Stable Diffusion (weight=8) & 18.82 & 115.24 & DDIM (NFE=100) & 80 & $\infty$ \\
        \bottomrule
    \end{tabular}
    }
    \label{tab:exp_all}
\end{table}

\subsection{Configurations of DiffRS}
\label{subsec:app_config_diffrs}

We integrate DiffRS into the code implementation of each base sampler: DG codebase\footnoteref{footnote:DG} for EDM-based samplers; DG-ImageNet codebase\footnoteref{footnote:DG_imagenet} for ImageNet 256$\times$256; DPM-Solver-v3~\cite{zheng2023dpm} codebase\footnote{\label{footnote:DPM-Solver}\url{https://github.com/thu-ml/DPM-Solver-v3}} for DPM-Solver++; Consistency Models codebase\footnote{\label{footnote:CM}\url{https://github.com/openai/consistency_models}} for CD; and Restart codebase\footnote{\label{footnote:Restart}\url{https://github.com/Newbeeer/diffusion_restart_sampling}}, built on Diffusers\footnote{\url{https://github.com/huggingface/diffusers}}, for Stable Diffusion.
For the benchmark datasets, we utilize a single NVIDIA GeForce RTX 4090 GPU, CUDA 11.8, and PyTorch 1.12. 
For the text-to-image generation, we use a single NVIDIA L40S GPU with CUDA 11.8 and PyTorch 2.1. Our implementation is available at: \url{https://github.com/aailabkaist/DiffRS}.

To estimate the rejection constant $M_t$, we generate 1,000 samples with evaluating the unnormalized acceptance probability $\bar{A}^{\boldsymbol{\phi}}_t =\frac{ \hat{L}_t^{\boldsymbol{\phi}}(\rvx_t)}{ \hat{L}_{t+1}^{\boldsymbol{\phi}}(\rvx_{t+1})}$ using the trained discriminator. Then, we select the $\gamma$\textsuperscript{th} percentile values from these values as the rejection constant for each timestep, with the minimum value of $M_t$ set to one.
Additionally, we set a maximum iteration $K$ to prevent looping within a single path. If this limit is exceeded, we initialize the sampling again from the prior distribution. In most cases, we set $K$ to either $\infty$ or three times the NFE of the base sampler. The hyperparameters for each experiment, along with their corresponding performance, are provided in \cref{tab:exp_all}.

\subsection{Configurations of Pre-trained Diffusion Models}
\label{subsec:app_config_pretrained}

For CIFAR-10, we employ the pre-trained DDPM++ cont. and EDM models obtained from the EDM repository.\footnote{\label{footnote:edm}\url{https://github.com/NVlabs/edm}} For FFHQ and AFHQv2, we use the pre-trained EDM models also available in the EDM repository.\footnoteref{footnote:edm} In the case of ImageNet 64$\times$64, we use the pre-trained EDM model from the EDM repository\footnoteref{footnote:edm}, and the consistency distillation model from the Consistency Model repository.\footnoteref{footnote:CM} For ImageNet 256$\times$256, we use the pre-trained DiT-XL/2 from the DiT repository.\footnote{\label{footnote:DiT}\url{https://github.com/facebookresearch/DiT}} In the text-to-image generation task, we use Stable Diffusion v1.5 pre-trained on LAION-5B, available from HuggingFace.\footnote{\url{https://huggingface.co/runwayml/stable-diffusion-v1-5}}

\subsection{Evaluation Procedure}
\label{subsec:app_eval}

We evaluate the performance of diffusion models using Fréchet Inception Distance (FID). FID calculations are performed using the DG~\cite{kim2023refining} code, and we report the results for the random seeds.
For ImageNet 256$\times$256, we also report Inception Score (IS)~\cite{salimans2016improved}, sFID~\cite{nash2021generating}, Precision (Prec), Recall (Rec), and F1 of Prec and Rec~\cite{kynkaanniemi2019improved}, evaluated by ADM~\cite{dhariwal2021diffusion} code.
In the stable diffusion experiment, FID and CLIP score calculations are conducted using the Restart code. CLIP scores are evalated using the open-sourced ViT-g/14~\cite{ilharco2021openclip}.

\section{Additional Experiment Results}
\label{sec:app_add_exp}

\begin{figure}[t]
  \begin{minipage}[b]{0.63\linewidth}
  \begin{minipage}[t]{\linewidth}
    \centering
    \captionof{table}{Performance on FFHQ and AFHQv2 with EDM~\cite{karras2022elucidating}.}
    \vskip 0.1in
    \adjustbox{max width=0.8\linewidth}{%
    \begin{tabular}{lcccc}
        \toprule
        \multirow{2}{*}[-0.5\dimexpr \aboverulesep + \belowrulesep + \cmidrulewidth]{Sampler} & \multicolumn{2}{c}{FFHQ} & \multicolumn{2}{c}{AFHQv2} \\
        \cmidrule(lr){2-3} \cmidrule(lr){4-5}
        & FID & NFE & FID & NFE \\
        \midrule
        \multirow{2}{*}{EDM (Heun)~\cite{karras2022elucidating}}    & 2.41 & 71     & 2.00 & 71     \\
                & 2.43 & 199    & 2.05 & 145    \\
        \midrule
        \multirow{2}{*}{DG~\cite{kim2023refining}}      & 1.96 & 71     & 1.88 & 71     \\
                & 1.93 & 199    & 1.85 & 145     \\
        \midrule
        \bf{DiffRS (ours)}  & \bf{1.60} & 198.65  & \bf{1.80} & 144.92  \\
        \bottomrule
    \end{tabular}
    }
    \label{tab:ffhq}
  \end{minipage}
  \hfill
  \begin{minipage}[b]{\linewidth}
    \centering
        \captionof{table}{Performance on CIFAR-10 with DDPM++ cont.~\cite{song2021scorebased}.}
        \vskip 0.1in
        \begin{tabular}{lcc}
            \toprule
            Sampler & FID$\downarrow$ & NFE$\downarrow$    \\
            \midrule
            EDM (Heun)~\cite{karras2022elucidating} & 2.89 & 63 \\
            EDM (SDE)~\cite{karras2022elucidating} & 2.35 & 1023 \\
            Restart~\cite{xu2023restart} & 2.11 & 519 \\
            \bf{DiffRS (ours)} & 1.91 & 151.86     \\
            \bottomrule
        \end{tabular}
        \label{tab:exp_ddpm}
  \end{minipage}
  \end{minipage}
  \hfill 
  \begin{minipage}[t]{0.35\linewidth}
    \centering
    \includegraphics[width=\linewidth]{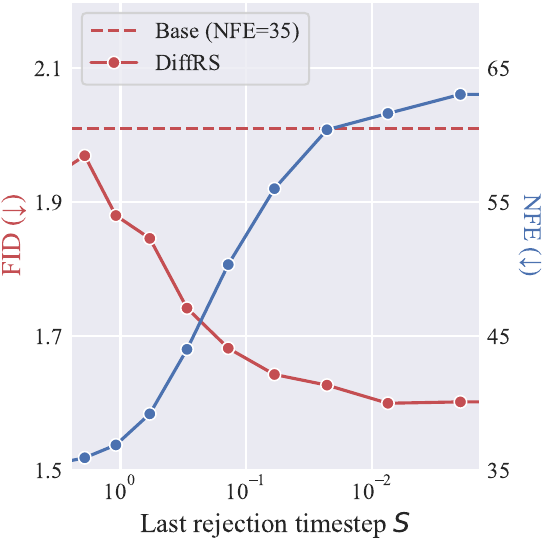}
    \caption{Trade-off between FID and NFE on unconditional CIFAR-10 varying the last rejection timestep.}
    \label{fig:laststep}
  \end{minipage}
\end{figure}

\subsection{Experimental Results on FFHQ and AFHQv2}
\label{subsec:app_ffhq}

In \cref{tab:ffhq}, we present the performance on FFHQ~\cite{karras2019style} and AFHQv2~\cite{choi2020stargan}. We use the Heun with 71 NFEs as the base sampler for DffRS and compare DiffRS to Heun and DG. Remarkably, DiffRS demonstrates significant improvements in FID over the base sampler on these benchmark datasets (+0.81 for FFHQ and +0.20 for AFHQv2). In addition, our method exhibits superior performance even with similar NFEs.

\subsection{Experimental Results on DDPM++ cont.}
\label{subsec:app_ddpm}

\cref{tab:exp_ddpm} shows the performance with the DDPM++ cont. model~\cite{song2021scorebased} on the unconditional CIFAR-10 dataset. The baseline results are taken from the reported performance of Restart~\cite{xu2023restart}. We find that DiffRS shows the superior performance. Therefore, DiffRS works effectively for other diffusion backbones as well.

\subsection{Additional Ablation Studies}
\label{subsec:app_abl}

\cref{fig:laststep} shows the changes in FID and NFE when DiffRS is applied only up to $S$ instead of applying it to all timesteps. As $S$ increases, indicating a smaller interval for applying rejection sampling, the FID degrades. Similar to the analysis of the rejection constant in the main manuscript (\cref{fig:reject_const}), a drastic change in FID and NFE is observed around $\sigma=0.1$.

\begin{figure*}[t]
    \centering
    \begin{minipage}{.25\linewidth}
        \centering
        \includegraphics[width=\linewidth]{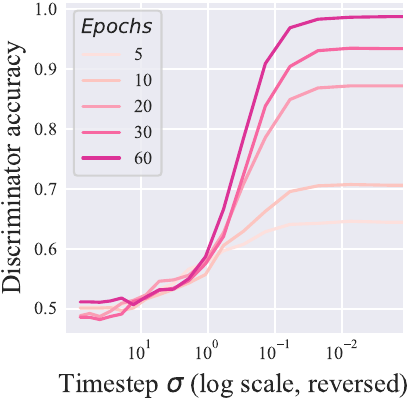}
        \caption{Accuracy of discriminator over each timestep varying discriminator training epochs, on unconditional CIFAR-10.}
        \label{fig:disc_acc}
    \end{minipage}
    \hfill
    \begin{minipage}{.25\linewidth}
        \centering
        \includegraphics[width=\linewidth]{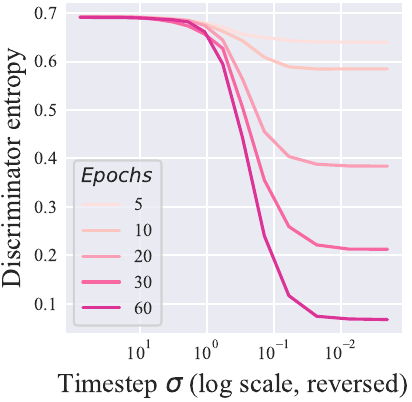}
        \caption{Entropy of discriminator over each timestep varying discriminator training epochs, on unconditional CIFAR-10.}
        \label{fig:disc_ent}
    \end{minipage}
    \hfill
    \begin{minipage}{.45\linewidth}
    \centering
    \includegraphics[width=\linewidth]{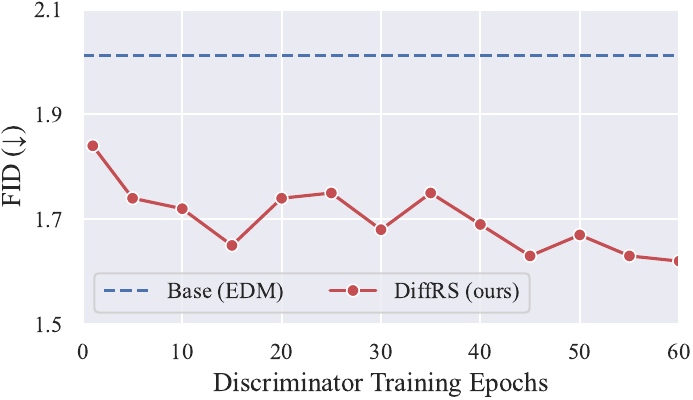}
    \caption{FID performance over discriminator training epochs on unconditional CIFAR-10.}
    \label{fig:disc_epochs}
    \end{minipage}
\end{figure*}

\subsection{Ablation Studies of Discriminator}
\label{subsec:app_abl_disc}

\textbf{Training Curve}~
As shown in \cref{fig:disc_acc,fig:disc_ent}, unlike GAN training, the discriminator training of our method is stable. This is because the score network that serves as the generator is pre-trained and fixed. Therefore, this is a single directional optimization process without min-max game, such as GAN. Experimentally, we plot the sample performance according to the number of discriminator training epochs. As shown in \cref{fig:disc_epochs}, we find that the performance improves and stabilizes already from the early epochs.

\begin{figure*}
    \centering
    \includegraphics[width=\linewidth]{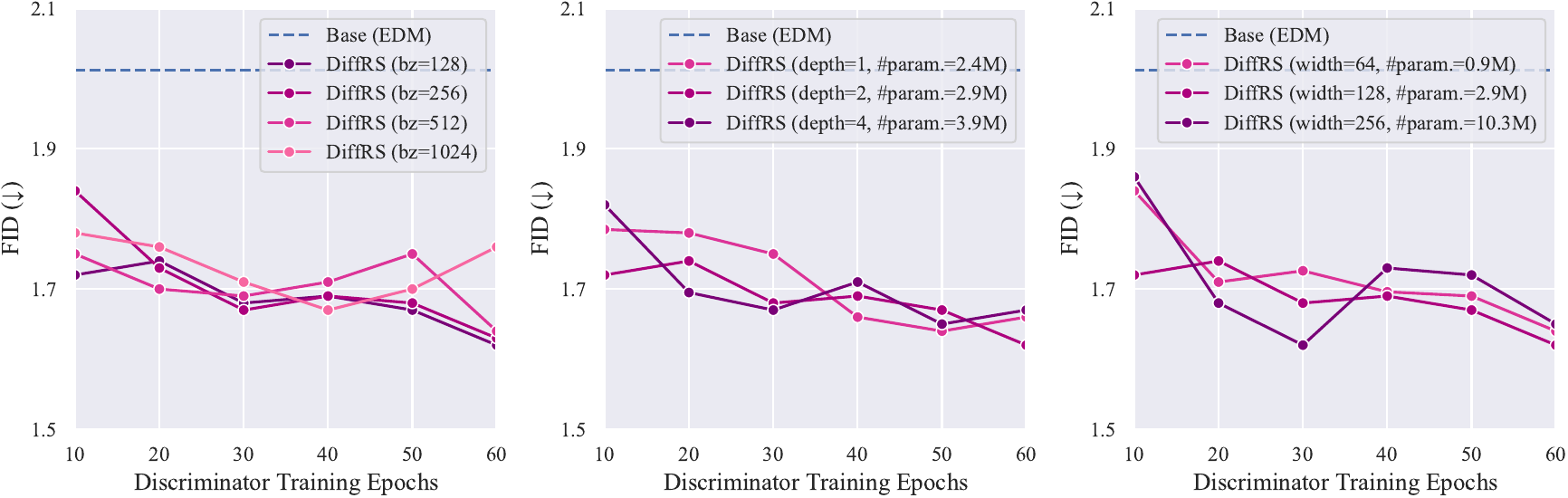}
    \caption{Ablation studies of the discriminator configurations on unconditional CIFAR-10. Each subfigure is for (top) batch size, (middle) depth of U-Net, and (bottom) width of U-Net. \texttt{bz} stands for batch size and \texttt{\#param.} is the number of discriminator parameters.}
    \label{fig:abl_disc}
\end{figure*}

\textbf{Configurations}~
We perform an ablation study to explore the effects of the discriminator configurations. We measured the sample performance across training epochs, varying the discriminator training batch size, and the depth and width of the U-Net. As shown in \cref{fig:abl_disc}, we observed superior performance compared to the base sampler across all settings.

\begin{figure*}
    \centering
    \includegraphics[width=0.8\linewidth]{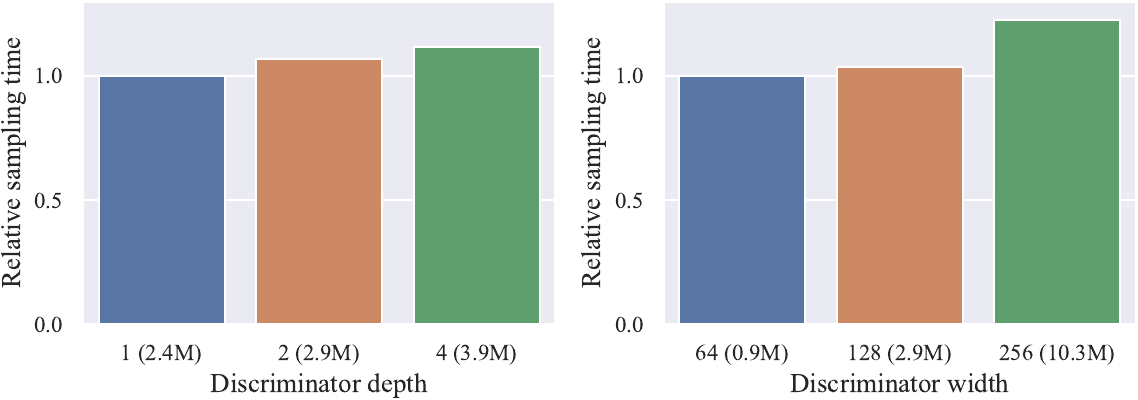}
    \caption{Relative sampling time varying discriminator configurations. The numbers in parentheses indicate the number of parameters in the discriminator.}
    \label{fig:abl_disc_time}
\end{figure*}

Also, we plot the sampling time according to the discriminator structure in \cref{fig:abl_disc_time}. As shown in the figure, although the sampling time increases slightly as the discriminator parameter size increases, the evaluation time of the diffusion models accounts for a larger proportion, resulting in a non-significant difference.

\begin{figure*}
    \centering
    \includegraphics[width=\linewidth]{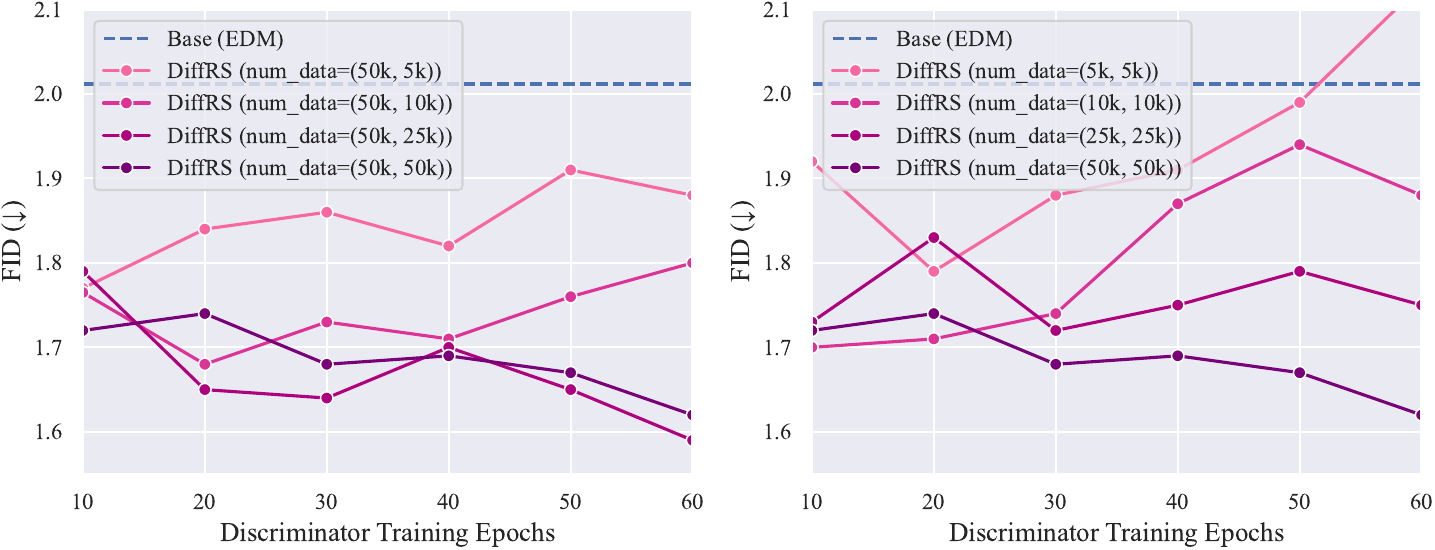}
    \caption{Ablation studies of the number of training samples for the discriminator on unconditional CIFAR-10. The tuple in the legend represents (number of training data, number of generated data).}
    \label{fig:abl_numdata}
\end{figure*}

\textbf{Number of Samples}~
We examine the performance according to the number of discriminator training data. We perform experiments on unconditional CIFAR-10 in two settings: 1) using all training images (50k examples) and varying the number of generated images, 2) using the same number of training images as generated images.

As shown in \cref{fig:abl_numdata}, when using all training images, even generating only 10\% of the training images (5k images) outperforms the base sampler. Also, we observe improved performance as the number of generated images increases. However, when matching the number of training images to the number of generated images, we observe a decrease in performance as training progresses when the number of samples is small. This also happens when all training images are used, but the number of generated images is small. We attribute this to the reduced number of training images leading to overfitting problems, resulting in inaccurate density ratio estimation by the discriminator.

Therefore, it is preferable to use all of the given training data, and more generated data generally improves performance. However, even using only 10\% of the training data can provide better performance than the base sampler.

\subsection{Estimation of Rejection Constant $M_t$}
\label{subsec:app_rej_const}

Theoretically, $M_t$ should always be greater than the ratio of the target distribution to the proposal distribution for all instances. Therefore, a proper estimator of $M_t$ would be the maximum value of the density ratio extracted from the samples, i.e., the rejection percentile $\gamma=100$(\%). However, in most our experiments, adjusting $\gamma$ in the range of [75, 85] worked well. Specifically, \cref{fig:gamma} in the main manuscript illustrates the performance variation with respect to $\gamma$, where it can be observed that the FID increases significantly as $\gamma$ becomes very large.

\begin{wrapfigure}[18]{R}{0.3\textwidth}
     \centering
     \includegraphics[width=\linewidth]{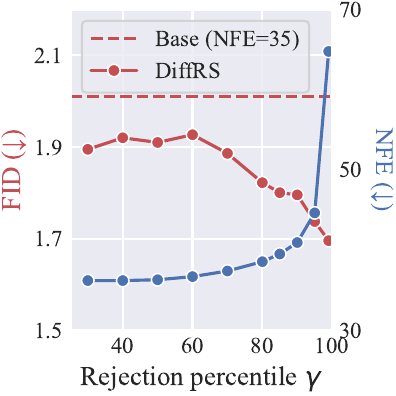}
     \caption{Sensitivity analysis of $\gamma$ with early-stage discriminator (trained by 5-epochs) on unconditional CIFAR-10.}
     \label{fig:gamma_sens}
\end{wrapfigure}

We believe that such cases are due to problems with the discriminator network used to estimate the density ratio. To investigate this, we measure the entropy and accuracy of the discriminator outputs of the training dataset for each timestep over discriminator training epochs. As shown in \cref{fig:disc_acc,fig:disc_ent}, for small epochs, both prediction confidence and accuracy are low, and as the epochs increase, confidence and accuracy increase significantly. This could indicate that overconfidence problems occur as training progresses, possibly leading to an inaccurate density ratio estimate that is skewed toward extreme values, thus degrading performance.

In this case, we believe that lowering the rejection constant $M_t$ by the rejection percentile $\gamma$ helped alleviate the problem of overconfidence in the discriminator. To investigate this further, we did a small experiment by limiting the training of discriminator to suppress the overconfidence problem. We examine the performance changes with respect to $\gamma$ for the early-stage discriminator (i.e., trained by 5-epochs). As shown in \cref{fig:gamma_sens}, the early-stage discriminator continues to perform better as $\gamma$ increases. While the performance is generally better than the baseline, it did not reach the best performance (FID=1.59) of the final discriminator (trained by 60-epochs). Therefore, while it is necessary to train the discriminator beyond a certain level, the overconfidence problem of neural networks can occur, but this can be mitigated by adjusting $\gamma$.

\begin{figure}[t]
    \begin{minipage}[t]{0.53\linewidth}
    %\vspace{0pt}
        \centering
        \captionof{table}{Sampling time (seconds) to generate 100 samples at 63 NFEs.}
        \vskip 0.1in
        \adjustbox{max width=\linewidth}{%
        \begin{tabular}{cccc}
            \toprule
            EDM (Heun) & EDM (SDE) & DG & DiffRS \\
            \midrule
            45.69 & 45.97 & 59.66 & 54.69 \\
            \bottomrule
        \end{tabular}
        }
        \label{tab:sampling_time}
    \end{minipage}
\hfill
    \begin{minipage}[t]{0.40\linewidth}
    \vspace{0pt}
        \centering
        \includegraphics[width=\linewidth]{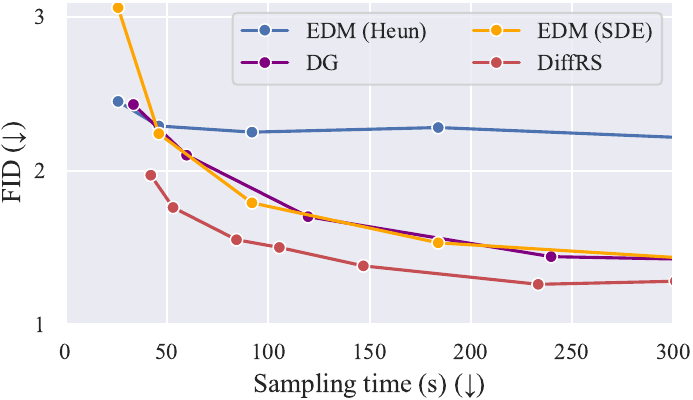}
        \caption{FID vs. Sampling time (seconds per 100 images) on ImageNet 64$\times$64 with EDM.}
        \label{fig:sampling_time}
    \end{minipage}
    \vspace{-1.7em}
\end{figure}

\subsection{Sampling Time}
\label{subsec:app_sampling_time}

DG and DiffRS require additional sampling time due to the use of an auxiliary discriminator network. DG requires discriminator evaluation and gradient computation at each timestep, while DiffRS only requires discriminator evaluation at each timestep. \cref{tab:sampling_time} shows the sampling time taken to generate 100 samples at the same NFE. DiffRS takes longer than the base samplers because of the discriminator evaluation, and DG takes more time due to the gradient computation. \cref{fig:sampling_time} illustrates the FID changes in terms of the sampling time required to generate 100 samples. As seen in the figure, our model demonstrates superior performance for the same sampling time.

\subsection{Generated Images}
\label{subsec:app_gen_img}

\cref{fig:images_cifar_uncond,fig:images_cifar_cond,fig:images_ffhq,fig:images_afhqv2} show the generated images of DiffRS on the benchmark datasets.
\cref{fig:images_imagenet_130,fig:images_imagenet_429} provide the uncurated conditional generated images using the base sampler and DiffRS on the ImageNet 64$\times$64 to enable direct comparison of sample quality. For the consistency distillation model, \cref{fig:images_imagenet_cd_compare} compares the generated images of the base samplers and our method. \cref{fig:images_sd_1} provides the text-conditional generated images with a resolution of 512$\times$512 from Stable Diffusion.

\newpage
\begin{figure}[t]
    \centering
    \includegraphics[width=0.95\linewidth]{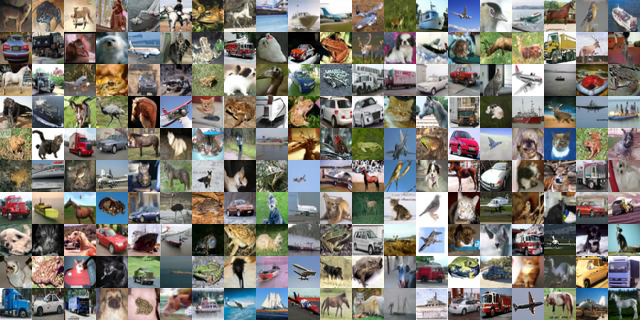}
    \caption{The uncurated generated images of DiffRS on unconditional CIFAR-10 with EDM (NFE=64.06, FID=1.59).}
    \label{fig:images_cifar_uncond}
\end{figure}

\begin{figure}[t]
    \centering
    \includegraphics[width=0.95\linewidth]{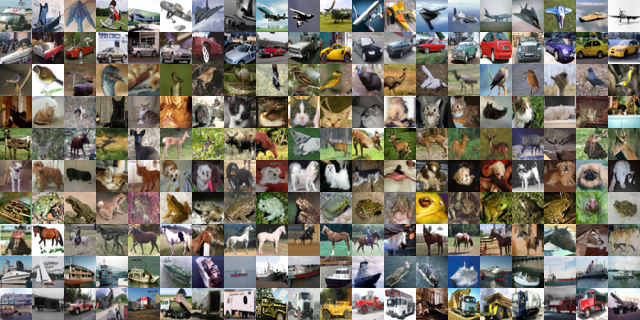}
    \caption{The uncurated generated images of DiffRS on conditional CIFAR-10 with EDM (NFE=88.22, FID=1.52).}
    \label{fig:images_cifar_cond}
\end{figure}

\begin{figure}[t]
    \centering
    \includegraphics[width=0.95\linewidth]{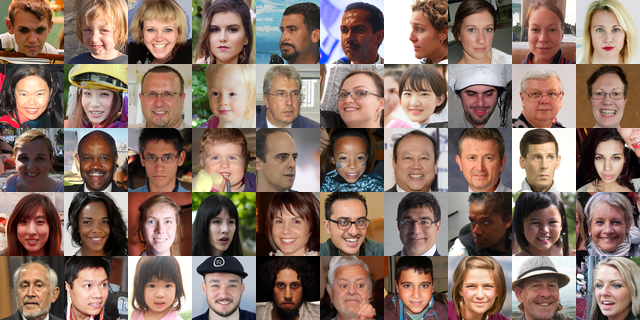}
    \caption{The uncurated generated images of DiffRS on unconditional FFHQ with EDM (NFE=198.65, FID=1.60).}
    \label{fig:images_ffhq}
\end{figure}

\begin{figure}[t]
    \centering
    \includegraphics[width=0.95\linewidth]{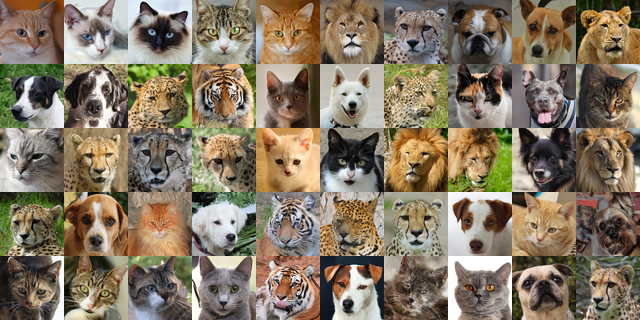}
    \caption{The uncurated generated images of DiffRS on unconditional AFHQv2 with EDM (NFE=144.92, FID=1.80).}
    \label{fig:images_afhqv2}
\end{figure}

\begin{figure}[t]
    \centering
    \begin{subfigure}[b]{0.45\linewidth}
        \centering
        \includegraphics[width=\linewidth]{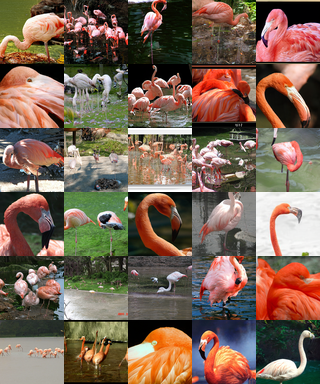}
        \caption{EDM (SDE) (NFE=127, FID=1.79)}
    \end{subfigure}
    \hfill
    \begin{subfigure}[b]{0.45\linewidth}
        \centering
        \includegraphics[width=\linewidth]{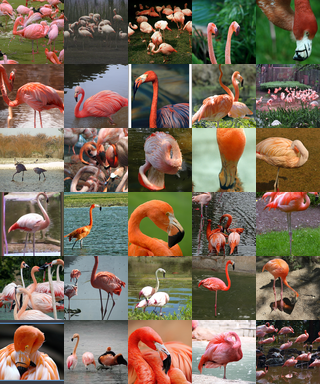}
        \caption{EDM (SDE) + DiffRS (NFE=273.93, FID=1.26)}
    \end{subfigure}
    \caption{The uncurated generated images of \texttt{flamingo} class of ImageNet 64$\times$64 with EDM.}
    \label{fig:images_imagenet_130}
\end{figure}

\begin{figure}[t]
    \centering
    \begin{subfigure}[b]{0.45\linewidth}
        \centering
        \includegraphics[width=\linewidth]{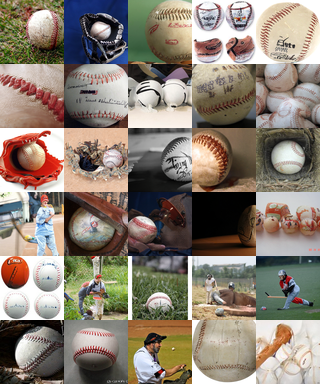}
        \caption{EDM (SDE) (NFE=127, FID=1.79)}
    \end{subfigure}
    \hfill
    \begin{subfigure}[b]{0.45\linewidth}
        \centering
        \includegraphics[width=\linewidth]{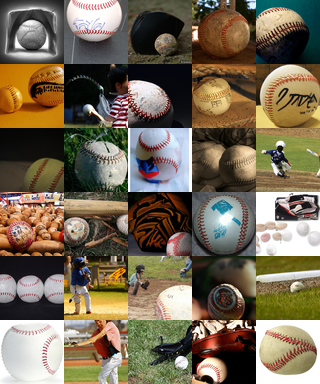}
        \caption{EDM (SDE) + DiffRS (NFE=273.93, FID=1.26)}
    \end{subfigure}
    \caption{The uncurated generated images of \texttt{baseball} class of ImageNet 64$\times$64 with EDM.}
    \label{fig:images_imagenet_429}
\end{figure}

\begin{figure}[t]
    \centering
    \begin{subfigure}[b]{0.95\linewidth}
        \centering
        \includegraphics[width=\linewidth]{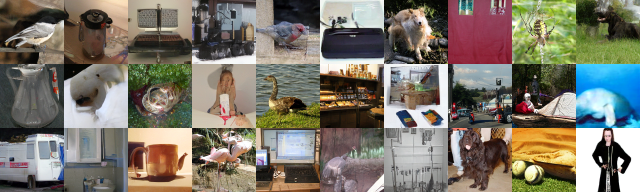}
        \caption{CD (NFE=2, FID=4.68)}
    \end{subfigure}
    \hfill
    \begin{subfigure}[b]{0.95\linewidth}
        \centering
        \includegraphics[width=\linewidth]{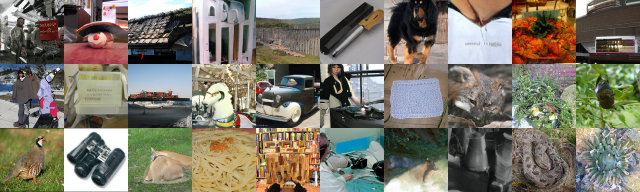}
        \caption{CD (NFE=8, FID=3.97)}
    \end{subfigure}
    \hfill
    \begin{subfigure}[b]{0.95\linewidth}
        \centering
        \includegraphics[width=\linewidth]{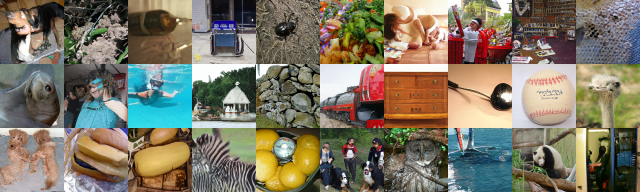}
        \caption{CD (NFE=2) + DiffRS (NFE=8.01, FID=3.07)}
    \end{subfigure}
    \caption{The uncurated generated images of (a-b) CD-based sampler and (c) DiffRS on conditional ImageNet 64$\times$64 dataset with CD.}
    \label{fig:images_imagenet_cd_compare}
\end{figure}

\begin{figure}[t]
    \centering
    \begin{subfigure}[t]{0.32\linewidth}
        \centering
        \includegraphics[width=\linewidth]{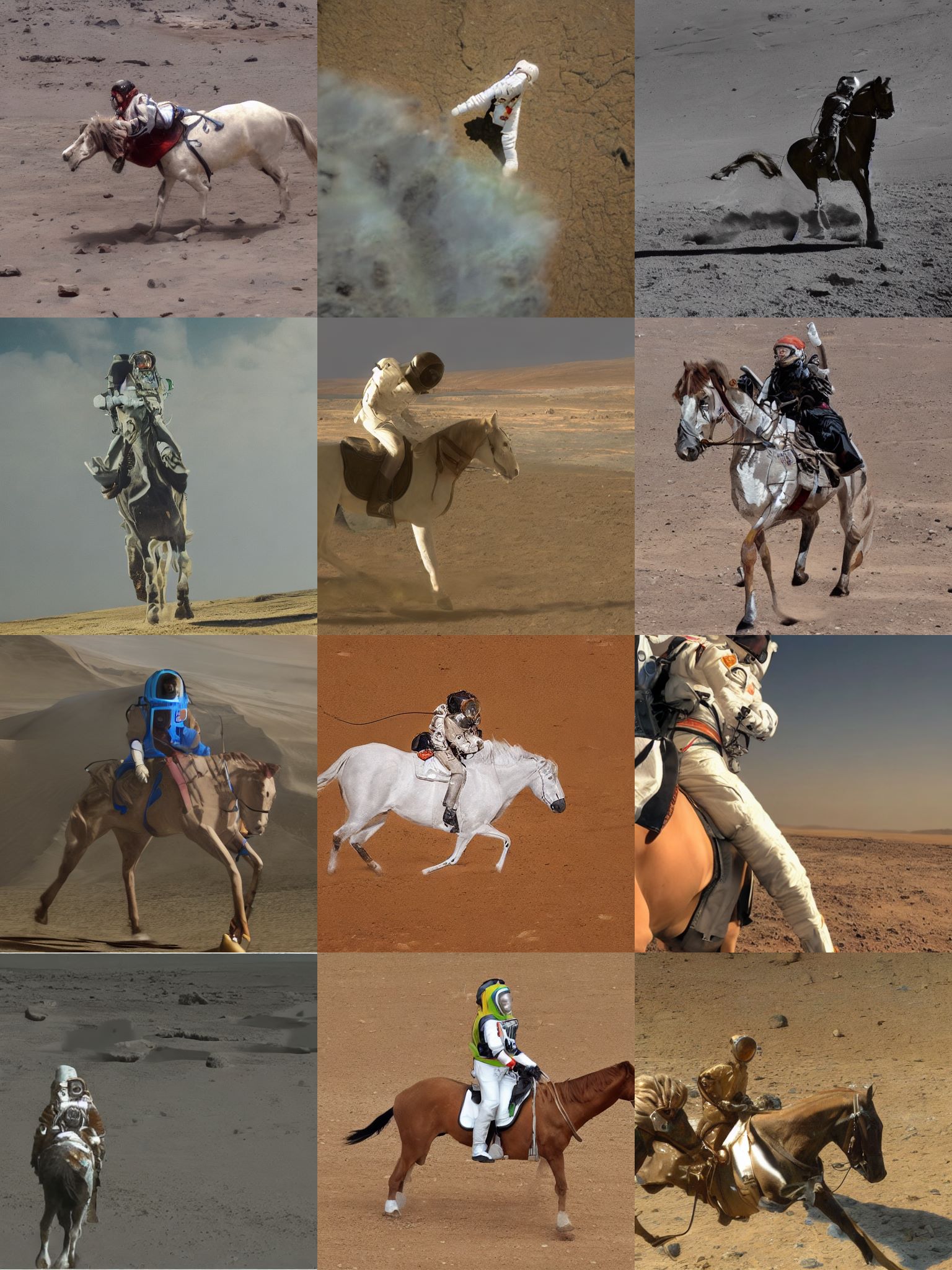}
        \caption{DDIM (NFE=100, FID=15.90)}
    \end{subfigure}
    \hfill
    \begin{subfigure}[t]{0.32\linewidth}
        \centering
        \includegraphics[width=\linewidth]{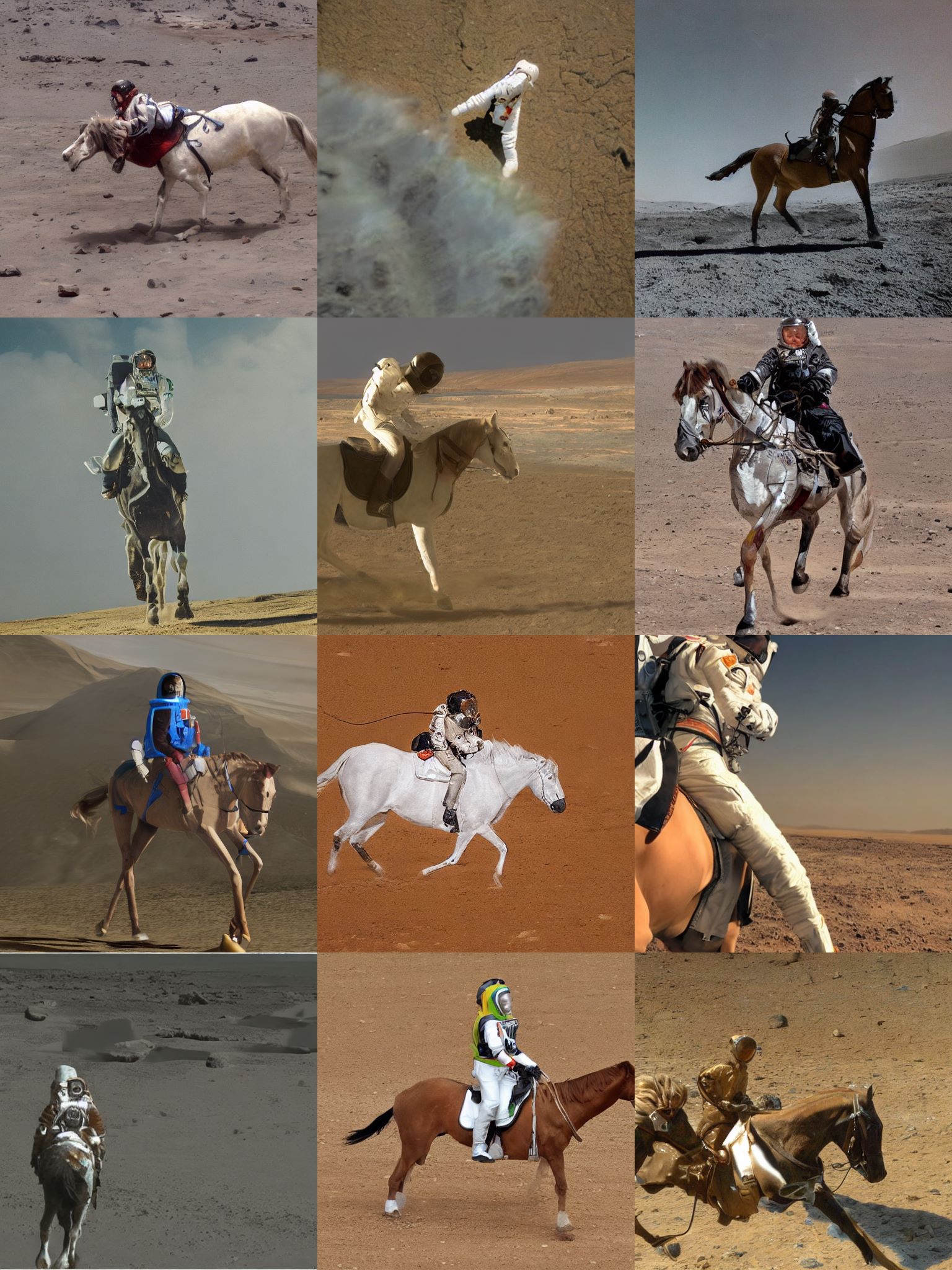}
        \caption{DDIM (NFE=200, FID=15.29)}
    \end{subfigure}
    \hfill
    \begin{subfigure}[t]{0.32\linewidth}
        \centering
        \includegraphics[width=\linewidth]{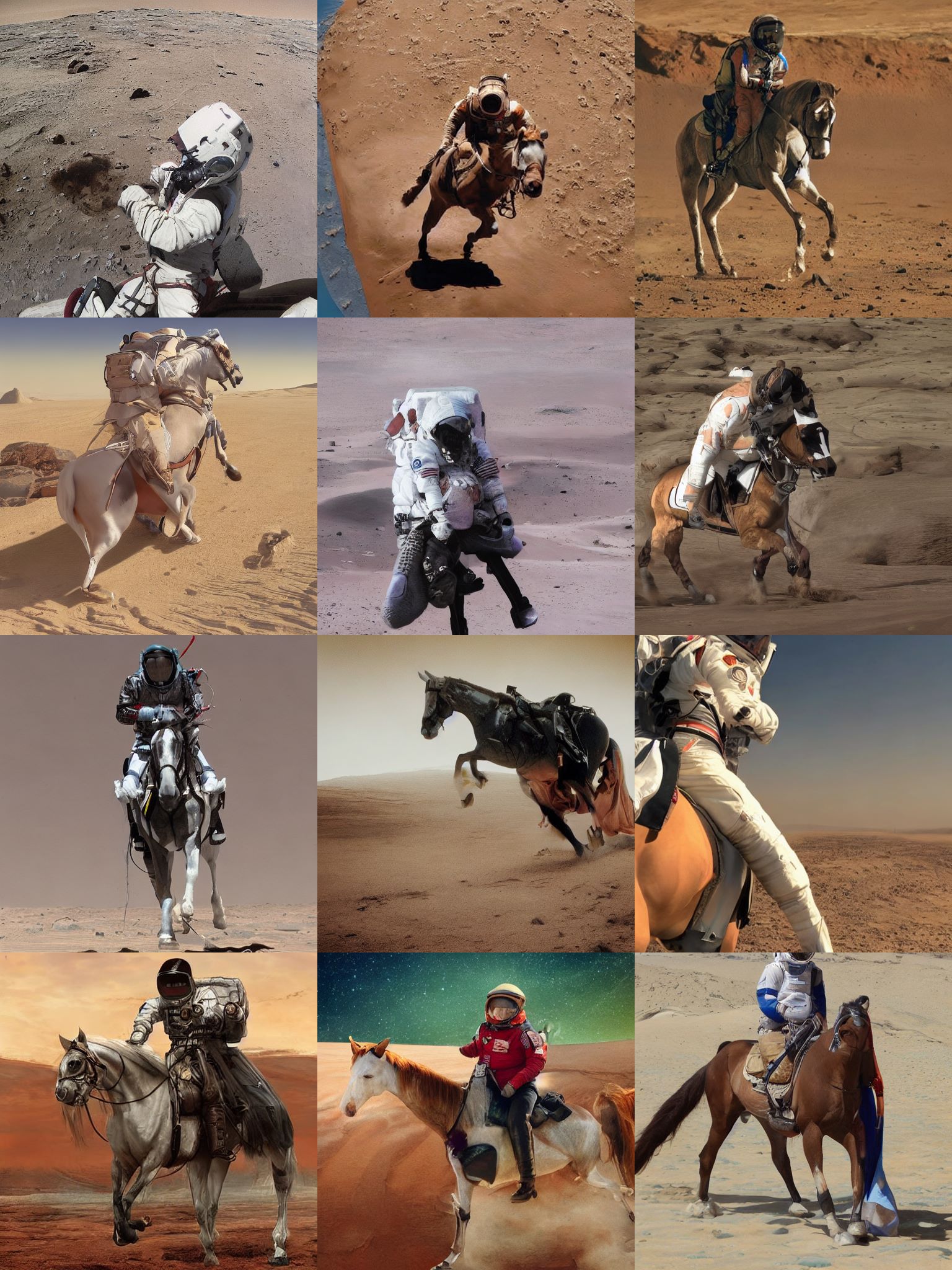}
        \caption{DDIM (NFE=100) + DiffRS (ours) \\ (NFE=166.95, FID=13.21)}
    \end{subfigure}
    \caption{The uncurated generated images, with a resolution of 512$\times$512, corresponding to the text prompt \texttt{A photo of an astronaut riding a horse on mars}, using Stable Diffusion v1.5 with a classifier-free guidance weight of 2.}
    \label{fig:images_sd_1}
\end{figure}

%%%%%%%%%%%%%%%%%%%%%%%%%%%%%%%%%%%%%%%%%%%%%%%%%%%%%%%%%%%%%%%%%%%%%%%%%%%%%%%
%%%%%%%%%%%%%%%%%%%%%%%%%%%%%%%%%%%%%%%%%%%%%%%%%%%%%%%%%%%%%%%%%%%%%%%%%%%%%%%

\end{document}